\documentclass{article}

\usepackage{microtype}
\usepackage{graphicx}
\usepackage{subcaption}
\usepackage{booktabs} 

\usepackage{hyperref}

\usepackage[accepted]{icml2026}

\usepackage{amsmath}
\usepackage{amssymb}
\usepackage{mathtools}
\usepackage{amsthm}

\definecolor{blue}{rgb}{0.2, 0.2, 0.9}
\definecolor{qdelta}{rgb}{0.93, 0.94, 0.98}
\definecolor{dawnblue}{rgb}{0.23,0.43,0.91}
\definecolor{avg}{rgb}{0.88, 0.92, 0.99}
\definecolor{avgblue}{rgb}{0.93, 0.93, 0.935}
\definecolor{win}{rgb}{0.65, 0.10, 0.48}
\definecolor{bluelightgray}{rgb}{0.4, 0.7, 0.8}
\definecolor{electriclime}{rgb}{0.8, 1.0, 0.0}
\definecolor{malachite}{rgb}{0.04, 0.85, 0.32}
\definecolor{darkred}{rgb}{0.55, 0.0, 0.0}
\definecolor{black}{rgb}{0.0, 0.0, 0}
\definecolor{darkblue}{rgb}{0.0, 0.0, 0.50}
\definecolor{darkgreen}{rgb}{0.0, 0.2, 0.13}
\definecolor{darkorchid}{rgb}{0.6, 0.2, 0.8}
\definecolor{neonskyblue}{rgb}{0.4,1,1}
\definecolor{neongreen}{rgb}{0.6,0,0.6}
\definecolor{sbest}{rgb}{0.40, 0.40, 0.90}

\newcommand{\secondbest}[1]{\underline{#1}}

\DeclareMathOperator*{\argmin}{arg\,min}

\usepackage[capitalize,noabbrev]{cleveref}

\usepackage{amsthm}
\usepackage{thmtools}
\usepackage{thm-restate}

\theoremstyle{plain}

\theoremstyle{definition}

\theoremstyle{remark}

\usepackage[textsize=tiny]{todonotes}

\usepackage[cjk]{kotex}

\usepackage{changepage} 
\usepackage{algorithm}
\usepackage{algorithmic}

\usepackage[utf8]{inputenc}
\usepackage{caption}
\usepackage{multirow}
\usepackage{pbox}
\usepackage{graphicx}
\usepackage{bm}
\usepackage{siunitx}
\usepackage{lipsum}
\usepackage{mathtools}
\usepackage[table]{xcolor}
\usepackage{adjustbox}
\usepackage{wrapfig}
\usepackage{stfloats}

\usepackage{array}
\usepackage{booktabs}
\usepackage{colortbl}
\newcolumntype{C}{>{\centering\arraybackslash}p{1.15cm}}

\usepackage{tabularx}
\usepackage{arydshln}
\usepackage{nicefrac}       
\usepackage{microtype}      
\usepackage{xcolor}         
\usepackage{paralist}
\usepackage[table]{xcolor}

\usepackage{newfloat}
\usepackage{listings}

\begin{document}

\twocolumn[
  \icmltitle{STAR: Rethinking MoE Routing as Structure-Aware Subspace Learning}

  \icmlsetsymbol{equal}{*}

  \begin{icmlauthorlist}
    \icmlauthor{Sumin Park}{kaist}
    \icmlauthor{Noseong Park}{kaist}
  \end{icmlauthorlist}

  \icmlaffiliation{kaist}{Korea Advanced Institute of Science and Technology (KAIST), Daejeon, Republic of Korea}

  \icmlcorrespondingauthor{Noseong Park}{noseong@kaist.ac.kr}

  \icmlkeywords{Machine Learning, ICML}

  \vskip 0.3in
]

\printAffiliationsAndNotice{}  

\begin{abstract}
Mixture-of-Experts (MoE) scales model capacity efficiently by selectively routing inputs to a specialized subset of experts. However, input-expert specialization, the core motivation of MoE, critically depends on whether the router is actually aware of input structure. In practice, MoE routing is typically implemented as a shallow linear projection with limited awareness of input representation, which often leads to unstable routing. We propose \textsc{\textbf{STAR}}, a {St}ructure-{A}ware {R}outing that rethinks MoE routing as a subspace learning problem by augmenting standard learnable routing with an evolving principal subspace that tracks dominant input structure via Generalized Hebbian Algorithm (GHA). By aligning routing decisions directly with input structure, STAR enables stable expert specialization. We evaluate STAR on controlled synthetic setup and large-scale language and vision tasks, where it consistently improves routing quality and downstream performance over strong MoE baselines. Moreover, optional test-time subspace updates further enhance routing robustness and generalization under input distribution shifts. Code is available at \url{https://github.com/psmiz/STAR}.
\vspace{-0.6em}
\end{abstract}

\section{Introduction}
The Mixture-of-Experts (MoE)~\citep{6797059, 716791}, is a classic design that substantially scales up the model capacity with minimal computation overheads. Recently, incorporating MoE into the deep neural networks has achieved remarkable successes~\citep{eigen2014learningfactoredrepresentationsdeep, shazeer2017outrageouslylargeneuralnetworks}. Among the various variants of MoE, a sparsely gated MoE, namely SMoE, efficiently scales up Transformers~\citep{lepikhin2020gshardscalinggiantmodels, fedus2022switchtransformersscalingtrillion}, resulting in significant accuracy enhancements. Consisting of multiple subnetworks (experts) that specialize in different tasks, MoE can benefit from a large pool of specialized knowledge with {at modest computational cost} by selective input gating to a subset of these experts. This scalable and flexible nature of MoE is particularly appealing for Large Language Models (LLMs), which struggle with massive model sizes and diverse training datasets. Recent LLMs actively incorporate MoE layers~\citep{jiang2024mixtralexperts,dai2024deepseekmoeultimateexpertspecialization,qwen2025qwen25technicalreport,zhu2024llamamoebuildingmixtureofexpertsllama,deepseekai2024deepseekv2strongeconomicalefficient} in their architecture.

\begin{figure}[t]
    \centering
    \vspace{-0.5em}
    \includegraphics[width=0.90\linewidth]{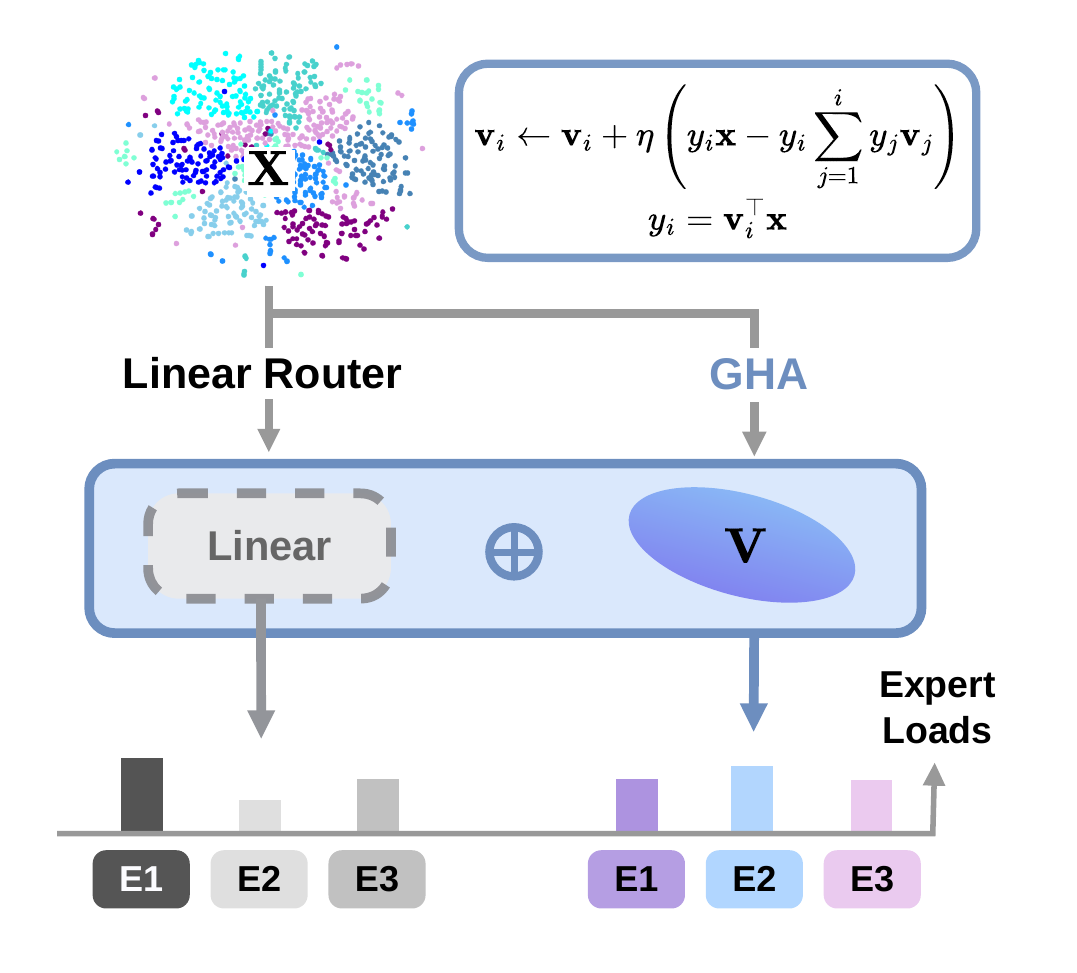}
    \vspace{-0.8em}
    \caption{{Overview of routing strategy of STAR.}}
    \label{fig:star_teaser}
    \vspace{-1.5em}
\end{figure}

The gating mechanism is the core component that governs the performance of MoE models, as it determines how each input is routed to a subset of experts. Despite its central role, current routing design remains overly simplistic, typically a shallow linear projection followed by softmax or sigmoid activation~\citep{fedus2022switchtransformersscalingtrillion, jiang2024mixtralexperts, aghdam2024damoedynamicexpertallocation, dai2024deepseekmoeultimateexpertspecialization, zhu2024llamamoebuildingmixtureofexpertsllama, qwen2025qwen25technicalreport}. 
Such a minimal design has limited capacity to capture the complex variation in the input distribution, which often results in unstable and imbalanced expert specialization~\citep{shazeer2017outrageouslylargeneuralnetworks}. 

To address this, prior work largely treats routing imbalance as the main target to be resolved. The most widely adopted approach adds an auxiliary regularizer to force balanced load allocation, such as the load-balancing loss \citep{shazeer2017outrageouslylargeneuralnetworks} and the GShard loss \citep{lepikhin2020gshardscalinggiantmodels}, or alternative routing paradigms such as expert-choice routing \citep{zhou2022ec, wang2024auxfreebalance}.
However, while load balancing is essential for efficient expert utilization, it alone 
does not ensure that the routing decisions reflect the input variations~\citep{wang2024auxiliarylossfreeloadbalancingstrategy, qiu2025demonsdetailimplementingload}. Fundamentally, the role of the gating mechanism is to be data-aware, discerning and responding to meaningful variations in the input so as to promote stable input–expert specialization. This points to a complementary, orthogonal axis that prior balancing-centric work leaves underexamined: whether the router is aware of input structure at all.

Motivated by this perspective, we propose \textbf{STAR} (Structure-Aware Routing), an input-aware routing framework that augments standard MoE routing with data-driven structural information learned in an unsupervised manner. Specifically, STAR incrementally learns a low-dimensional routing subspace that captures dominant directions of input variation and combines this with a task-supervised linear gate to construct the final routing logits. This design enables routing decisions to explicitly reflect underlying input structure while still guided by gradient signals optimized to downstream tasks. By targeting input-awareness as a complementary dimension to load balance, STAR improves expert specialization through structure-aware routing while remaining fully compatible with explicit balancing losses. Figure~\ref{fig:star_teaser} shows a comprehensive overview of our approach.

To evaluate the effectiveness of structure-aware routing, we conduct experiments in both controlled synthetic settings and large-scale real-world benchmarks. We first adopt a synthetic sequential modeling task based on multinomial Hidden Markov Models (HMMs), following the GINC data generation framework~\citep{xie2022explanationincontextlearningimplicit}.
We then validate our approach on a wide range of language and vision benchmarks spanning pretraining, fine-tuning, and out-of-distribution (OOD) scenarios, where STAR consistently improves routing quality and downstream task performance over other MoE baselines.

\section{Backgrounds}
\paragraph{Mixture of Experts} In Transformer-based models, a Mixture-of-Experts (MoE) layer, which consists of multiple expert networks $\{f_1, ..., f_K\}$ and a gating network ${G}$, i.e., router, replaces the feed-forward network (FFN) after the self-attention. The router, typically implemented as a shallow linear-softmax network, determines which subset of experts should process each input. Based on the design choice of gating functions, the MoE can be classified into two types, dense MoE that activates all experts weighted by gating coefficients~\citep{pan2024densetrainingsparseinference, dou2024loramoealleviateworldknowledge} and sparse MoE (SMoE) that sparsely activates a few subset of experts, typically with top-k mechanism~\citep{fedus2022switchtransformersscalingtrillion, dai2024deepseekmoeultimateexpertspecialization}. To efficiently scale up the model, SMoE has been widely preferred over dense variants for practical use~\citep{Cai_2025, li2023sparsemixtureofexpertsdomaingeneralizable}. Throughout the paper, we will also focus on improving the sparse gating.

\vspace{-0.5em}
\paragraph{Generalized Hebbian Algorithm}  
As a generalization of Oja's rule~\citep{oja1985stochastic}, which provides an incremental solution for estimating the first eigenvector of the data covariance, i.e., the first principal component (PC) of PCA, the {Generalized Hebbian Algorithm} (GHA)~\citep{sanger1989optimal}, an unsupervised learning algorithm, incrementally estimates the top-$K$ principal components of a data distribution. Given input vectors ${x} \in \mathbb{R}^d$, GHA learns an orthonormal basis matrix ${V} \in \mathbb{R}^{K \times d}$ such that each row ${v}_i$ approximates the $i$-th principal direction of the input covariance matrix. At each iteration, GHA performs the following update for each component $i \in \{1, \ldots, K\}$:
\[
{v}_i \leftarrow {v}_i + \eta \left( y_i {x} - y_i \sum_{j=1}^{i} y_j {v}_j \right), \quad y_i = {v}_i^\top {x},
\]
where $\eta$ is the learning rate controlling the step size of GHA updates.
Iterative updates of ${v}_i$ following this rule enforces orthogonality among the learned components while aligning them with the dominant directions of input variance. Unlike standard PCA, GHA operates online and doesn't require explicit computation of the covariance matrix, making it suitable for streaming or mini-batch scenarios. Since PCA components correspond to the right singular vectors of the data matrix ${X}$ up to scale (i.e., the matrix ${V}$ in SVD of ${X} = {U} {\Sigma} {V}^\top$), GHA can be seen as an online approximation of the top-$K$ right singular vectors of ${X}$.

\vspace{-0.3em}
\paragraph{Multinomial Hidden Markov Model}  
A Hidden Markov Model (HMM)~\citep{baum1966statistical} is a generative probabilistic model for sequential data where each observation is emitted by a latent state, and these latent states evolve by  a Markov process where each hidden state depends only on the previous state and emits an observation independently by emission probabilities. Formally, let $\{h_t\}_{t=1}^T$ be the hidden state sequence and $\{o_t\}_{t=1}^T$ the observations. The generative process is defined by an initial distribution $\pi = p(h_1)$, transition probabilities $\mathcal{H} = p(h_t | h_{t-1})$, and emission probabilities $\mathcal{E} = p(o_t | h_t)$. 
The joint probability of an entire sequence of hidden states and observations, parameterized by $\theta = \{\pi, \mathcal{H}, \mathcal{E}\}$, is given by:
\[
p(o_{1:T}, h_{1:T} \mid \theta) = \pi(h_1) \prod_{t=2}^T p(h_t \mid h_{t-1}) \prod_{t=1}^T p(o_t \mid h_t)
\]
A {multinomial HMM} refers to a HMM where the emission probabilities are modeled by a categorical (multinomial) distribution over a finite set of discrete symbols. Each hidden state $h_t \in \{1, \dots, N\}$ is associated with a probability distribution over a finite set of observations $\mathcal{O} = \{1, \dots, M\}$. The emission probabilities are defined by an emission matrix $\mathcal{E} \in \mathbb{R}^{N \times M}$, with each row specifying the likelihood of emitting each symbol given a state. The model follows the standard HMM structure with an initial state distribution $\pi \in \mathbb{R}^N$ and a transition matrix $\mathcal{H} \in \mathbb{R}^{N \times N}$. This particular model is well suited for sequential data consisting of discrete symbols, such as natural language.

\section{Proposed Method}\label{sec:method}

\begin{algorithm}[H]
        \caption{STAR}
        \label{algo:STAR}
        \begin{algorithmic}[1]
        \STATE \textbf{Given:} input ${X} \in \mathbb{R}^{N \times d}$, trainable gating matrix ${W}_g \in \mathbb{R}^{K \times d}$, GHA basis ${V} \in \mathbb{R}^{K \times d}$, basis mixing matrix ${R} \in \mathbb{R}^{K \times K}$, interpolation coefficients $\alpha \in \mathbb{R}^{K}$, expert matrices $\{{W}_k\}_{k=1}^K$, iteration count per GHA update $m$.
        \STATE Randomly initialize ${W}_g, {V}, {R}, \alpha, \{{W}_k\}_{k=1}^K$
        \FOR{each input ${x} \in \mathbb{R}^d$}
            \STATE {GHA update with $m$ iterations per each forward pass:}
            \FOR{$m$ iterations}
            \FOR{each $k$ in $\{1, ..., K\}$}
                \STATE ${v}_k \leftarrow {v}_k + \eta y_k \left( {x} - \sum_{i=1}^k y_i {v}_i \right)$, $y_k = {v}_k^\top {x}$
                \STATE ${v}_k \leftarrow {v}_k / \|{v}_k\|_2$
            \ENDFOR
            \ENDFOR
            \\
            \STATE {Compute routing probabilities after basis mixing:}
            \begin{adjustwidth}{2em}{}
                \vspace{-1em}
                \STATE $l_{\text{linear}} = {x} {W}_g^\top, \,\,\, l_{\text{GHA}} = {x} {Z}^\top, \,\,\, {Z} = {R} {V}$
                \STATE $s = \text{Softmax}(\sigma(\alpha) \odot l_{\text{linear}} + (1-\sigma(\alpha)) \odot l_{\text{GHA}})$
            \end{adjustwidth}
        \ENDFOR
\end{algorithmic}
\label{alg:STAR}
\end{algorithm}
\vspace{-0.5em}

We propose \textbf{STAR} (Structure-Aware Routing), a routing framework that augments
standard MoE gating with an evolving input subspace incrementally learned during training.
The key idea is to make routing decisions reflect the dominant structure of the input,
which we formalize as its principal subspace.

\begin{restatable}[Principal input structure]{definition}{inputstructure}\label{def:input-structure}
Let ${x} \in \mathbb{R}^d$ be a hidden representation with a zero-mean assumption
$\mathbb{E}[{x}] = {0}$ and covariance $\Sigma_{x} := \mathbb{E}[{x}{x}^\top]$. The
rank-$K$ \emph{input structure} of ${x}$ is the principal subspace satisfying
\begin{equation}
    {S}_K^{*} \;:=\; \argmin_{{P} \in \mathbb{R}^{d \times K},\, {P}^\top {P} = {I}}
    \;\mathbb{E}\big\lVert {x} - {P}{P}^\top {x} \big\rVert_2^2,
    \label{eq:input-structure}
\end{equation}
whose columns are the top-$K$ eigenvectors of $\Sigma_{x}$, equivalently the directions of
maximal projected variance of ${x}$.
\end{restatable}

STAR estimates this subspace online via the Generalized Hebbian Algorithm (GHA),
maintaining an orthonormal basis ${V} \in \mathbb{R}^{K \times d}$ whose rows span
${S}_K^{*}$ and approximate the top-$K$ eigenvectors of $\Sigma_{x}$ (equivalently,
the top-$K$ right singular vectors of ${X}$), without explicitly forming $\Sigma_{x}$.

On top of this orthonormal basis ${V}$, STAR introduces a learnable mixing matrix
${R} \in \mathbb{R}^{K \times K}$ that maps the principal directions into expert-specific
routing vectors, giving the structure-aware routing matrix ${Z} = {R}{V}$ whose rows serve
as per-expert routing vectors.
Without mixing (i.e., ${R} = {I}$), each routing vector of $Z$ corresponds to a single principal component and thus directly inherits the variance hierarchy
of ${V}$. In this case, the experts aligned with the leading directions dominate routing, while
those tied to low-variance directions are rarely selected, causing imbalanced utilization.
Therefore, ${R}$ decouples the expert selection from this variance ordering of $V$ while keeping ${Z}$ grounded in the input structure spanned by ${V}$.
We further analyze the balancing effect of basis mixing, comparing
no mixing ${R} = {I}$, fixed random, and learnable ${R}$, in ~\Cref{subsec:mixingR}.

Given an input ${x} \in \mathbb{R}^d$, STAR forms combined routing logits from a task-supervised
linear gate, $l_{\text{linear}} = {x}{W}_g^\top$ with trainable gating matrix ${W}_g \in \mathbb{R}^{K \times d}$,
and from the structure-aware subspace, $l_{\text{GHA}} = {x}{Z}^\top$. The final routing
scores $s \in \mathbb{R}^{K}$ interpolate the two via a learnable coefficient $\alpha \in \mathbb{R}^K$ as
\[
s = \text{Softmax}\!\left( \sigma(\alpha) \odot l_{\text{linear}}
    + \left(1 - \sigma(\alpha)\right) \odot l_{\text{GHA}} \right),
\]
where $\sigma(\cdot)$ is the elementwise sigmoid. Thus, STAR provides a unified routing framework that extends conventional MoE routing with explicit awareness of input structure. The full algorithmic details are summarized in~\Cref{alg:STAR}.

\vspace{-0.5em}
\paragraph{Basis Construction}
We apply $m$ iterative GHA updates at each forward pass, continually refining the principal directions in response to the current hidden features.
\setlength{\columnsep}{7pt}
\begin{wrapfigure}{r}{0.27\textwidth}
    \centering
    \tiny
    \vspace{-0.5em}
    \includegraphics[width=\linewidth]{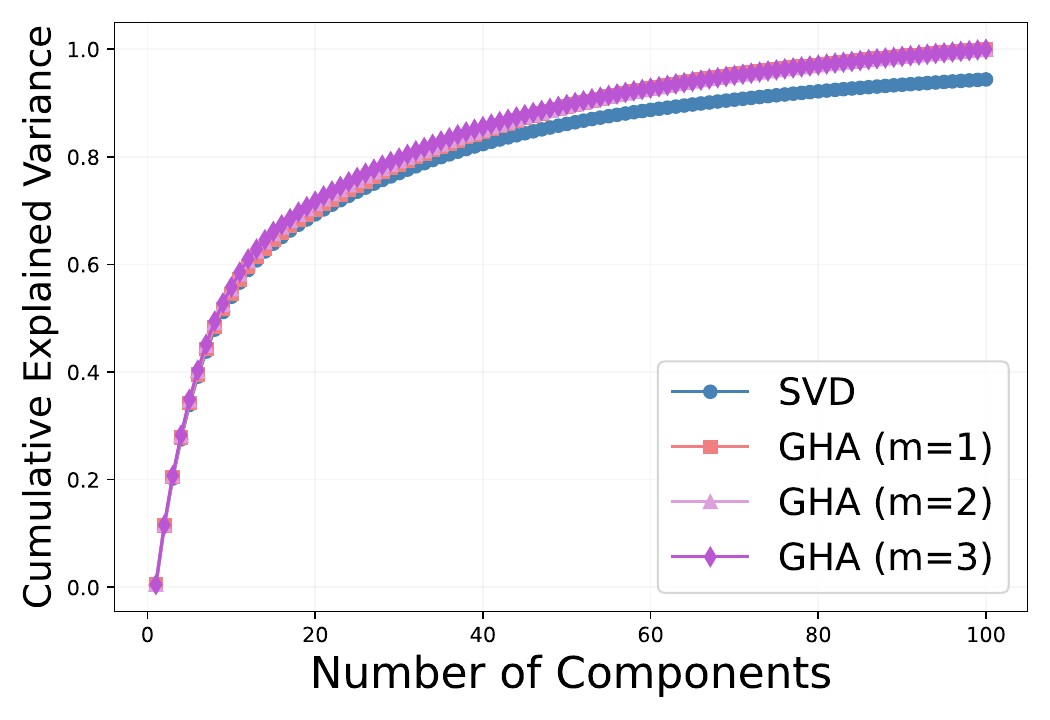}
    \vspace{-0.8em}
    \caption{Comparison of cumulative explained variance.}
    \label{fig:syn2_svdvsgha}
    \vspace{-1.5em}
\end{wrapfigure}
The iteration number $m$, as a tunable hyperparameter, controls the trade-off between approximation quality and computation cost. 
To assess the quality of the GHA-driven basis, we compare it against SVD on hidden representations extracted from the Transformer encoder, as shown in~\Cref{fig:syn2_svdvsgha}. 
We measure the cumulative explained variance of the top 100 components with varying iteration counts $m \in \{1, 2, 3\}$ per GHA update. GHA closely tracks SVD even with small $m=1$, demonstrating that GHA effectively approximates the principal subspace of the input distribution in an incremental manner. Further basis quality analysis under various experimental setup is provided in Appendix~\ref{app:basisanalysis}.

\section{Synthetic Experiments}\label{sec:synexp}

We construct a synthetic task based on a structured language modeling scenario using Multinomial HMMs, following the GINC dataset generation procedure~\citep{xie2022explanationincontextlearningimplicit}.
This provides the data with clear and interpretable structure, allowing us to rigorously isolate the impact of router design on expert specialization and routing stability. 

\subsection{Experimental Setup}

\paragraph{Base Layer}
We define the base MoE architecture shared across all synthetic experiments. The model consists of two main components: a gating module ${G}$ and a set of experts $\{f_k\}_{k=1}^K$ where ${W}_k \in \mathbb{R}^{d' \times d}$ is the weight matrix of expert $k$. For expert selection, we adopt a top-$k$ selection mechanism throughout the paper, where $k$ is a predefined hyperparameter. Given the input ${x} \in \mathbb{R}^d$, the routing score for expert $k$ is computed as $s_k = \mathrm{Softmax}_k({x}^\top {g}_k)$,
where ${g}_k \in \mathbb{R}^d$ is the $k$-th gating vector. The final output $\hat{{y}} \in \mathbb{R}^{d'}$ of the MoE is computed as $\hat{{y}} = \sum_{k \in \mathrm{Topk}(\{s_j\}_{j=1}^K, k)} s_k \cdot f_k({x})$ where $f_k({x}) = {W}_k {x}$.

\begin{figure*}[t]
\centering
    \centering
    \tiny
    \includegraphics[width=0.94\linewidth]{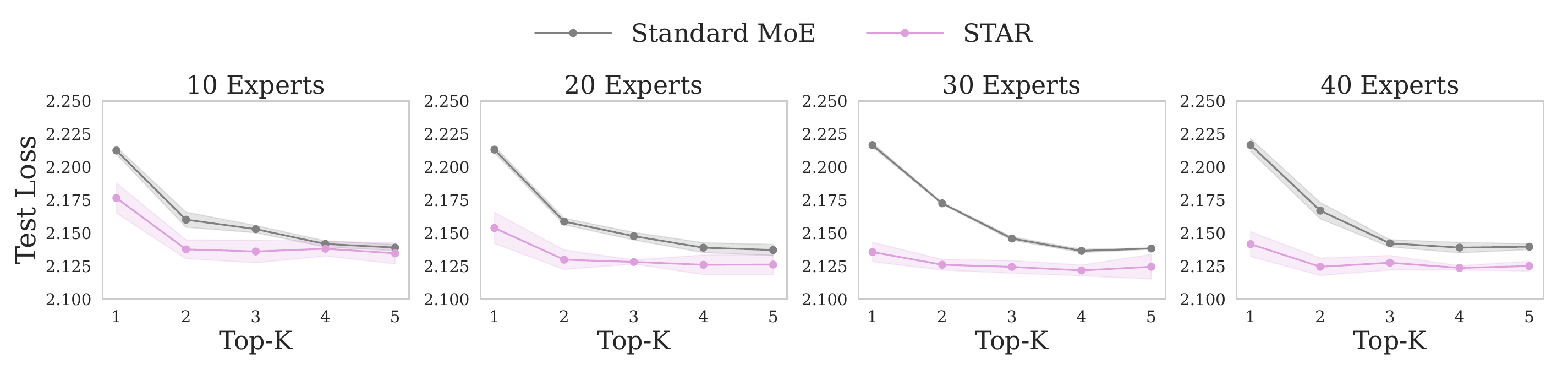}
    \vspace{-1em}
    \caption{Performance comparison of standard MoE and STAR. 
    Standard MoE: standard linear router + load balance regularizer. Test loss under varying numbers of experts and top-$k$. Shaded regions indicate standard deviation over three random seeds.}
    \label{fig:syn2_test}
\vspace{-1em}
\end{figure*}

\begin{figure*}[h]
    \centering
    \begin{minipage}{0.99\textwidth}
        \includegraphics[width=0.48\textwidth]{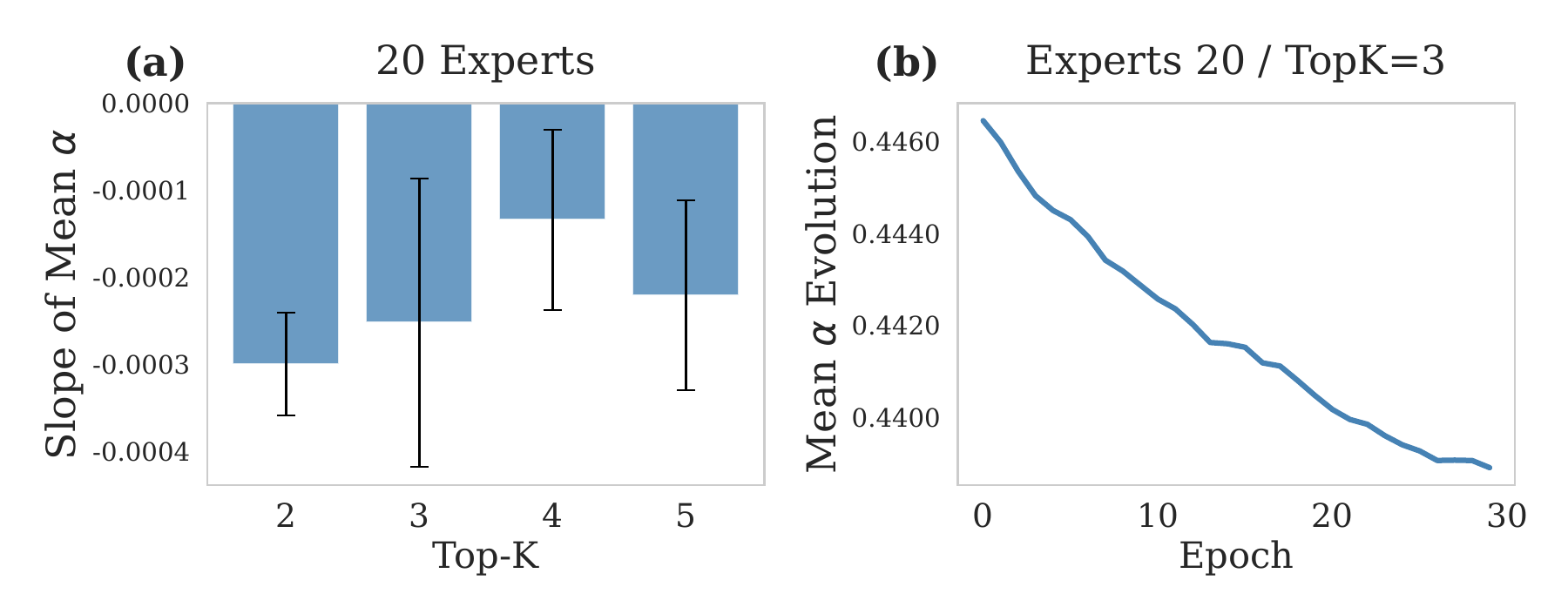}
        \includegraphics[width=0.49\textwidth]{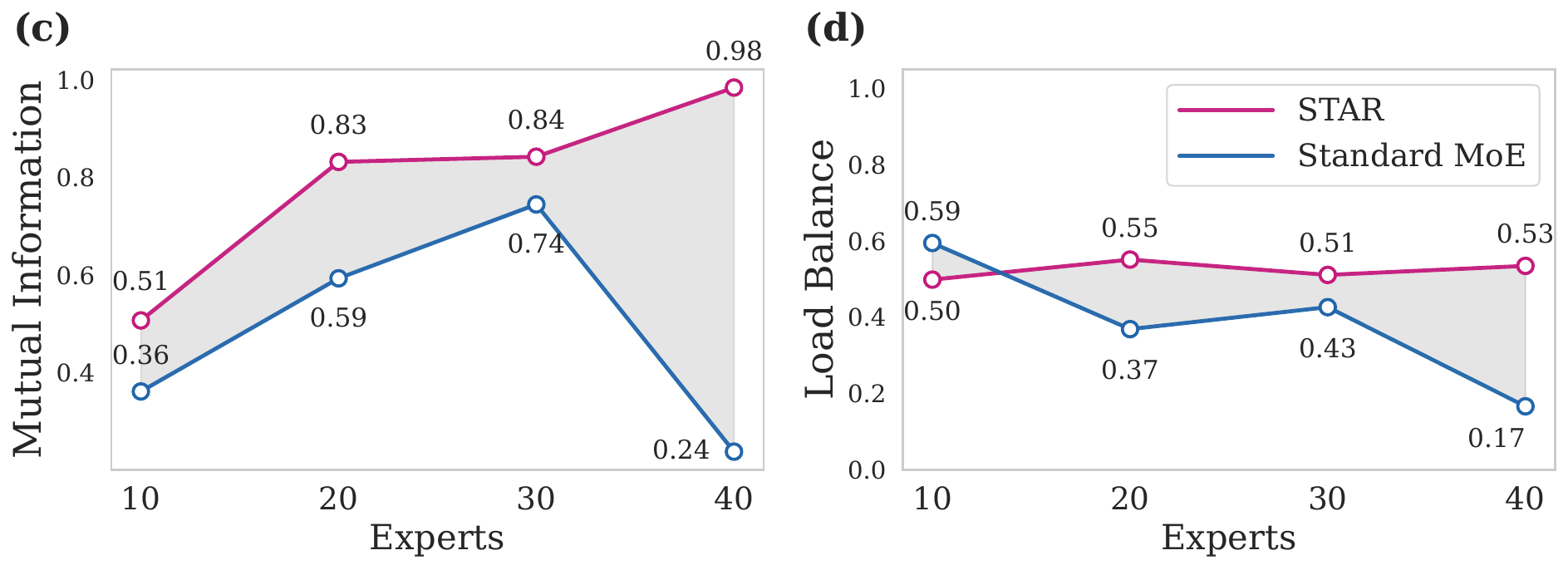}
    \end{minipage}
    \caption{Comprehensive analysis on synthetic results.
    $(a)$: Average slope of the mean $\sigma(\alpha)$ across different Top-$k$ settings.
    $(b)$: Temporal evolution of mean $\sigma(\alpha)$ throughout training epochs.
    $(c)$: Routing specialization comparison measured by expert-property mutual information $I(e, s)$.
    $(d)$: Load balance comparison measured by normalized load balance $H_{\text{norm}}$.}
    \label{fig:syn_anls}
    \vspace{-1em}
\end{figure*}

\paragraph{Standard MoE}
We compare STAR against the most widely adopted gating design across recent MoE
architectures~\citep{fedus2022switchtransformersscalingtrillion, jiang2024mixtralexperts, aghdam2024damoedynamicexpertallocation, dai2024deepseekmoeultimateexpertspecialization, zhu2024llamamoebuildingmixtureofexpertsllama, qwen2025qwen25technicalreport},
which we denote as Standard MoE. This corresponds to a standard linear projection with gating
weight ${W}_g \in \mathbb{R}^{K \times d}$, where ${g}_k = \{{W}_g\}_k$ is the trained gating
vector of the $k$-th expert, following common practice with a load-balancing
regularizer~\citep{shazeer2017outrageouslylargeneuralnetworks}.

\paragraph{Data Generation}
To model the synthetic language-like data, tokens are generated from latent entity-property states, temporally evolving by mixtures of HMMs, each carrying distinct context. 
Each hidden state is a tuple $h_t = (v_t, s_t)$, where $v_t \in \mathcal{V}$ and $s_t \in \mathcal{S}$ represents entity and property, respectively. A global memory matrix $M \in \mathcal{O}^{|\mathcal{V}| \times |\mathcal{S}|}$ maps each entity–property pair deterministically to a token in the vocabulary $\mathcal{O}$. The transitions are defined as:
\begin{align*}
    s_{t+1} &\sim P(s_{t+1} \mid s_t; \theta), \quad \theta \sim p(\theta), \\
    v_{t+1} &\sim 0.9 \cdot \delta(v_{t+1} = v_t) + 0.1 \cdot I(v_{t+1} \mid v_t),
\end{align*}
where $\theta \in \Theta$ is a concept parameter sampled from a uniform mixture of HMMs, each governing distinct property transition pattern. The $\delta(v_{t+1} = v_t)$ enforces entity persistence, yielding a sticky Markov process that keeps the same entity with probability 0.9 and switches uniformly with probability 0.1.  
At each step the emitted token is deterministic, $o_t = M[v_t, s_t]$ with $p(o_t \mid v_t, s_t) = 1$. Let $T$ be the sequence length, $n$ the number of training samples, and $t'$ the subsequence length. Each sample $(x_i, y_i)$ consists of input $x_i = [o_{t_i}, \dots, o_{t_i+t'-1}]$ and target $y_i = o_{t_i+t'}$. Thus, each sample uses $t'+1$ tokens, giving total length $T = n(t'+1) + t'$. The full sequence is $[x_1, y_1, o_{\text{delim}}, \dots, x_n, y_n, o_{\text{delim}}, x_{\text{test}}]$

\paragraph{Model}
We adopt a single-layer Transformer architecture whose final representation is augmented with a Mixture-of-Experts (MoE) layer. The model consists of a token embedding layer, sinusoidal positional encoding, and a Transformer encoder block. The MoE module is placed as the feed-forward network (FFN) head, replacing the standard MLP. It consists of $K$ expert networks, each implemented as a single linear layer followed by a \texttt{SiLU} activation, mapping from $\mathbb{R}^d$ to $\mathbb{R}^{d'}$. A gating network computes token-wise routing scores over experts, and the top-$k$ experts are selected for each token. A shared decoder layer maps the output back to the token vocabulary. The model is trained under standard next-token prediction paradigm.

\subsection{Results}
We configure the synthetic language modeling task with a vocabulary size of
$|\mathcal{O}| = 50$, $|\mathcal{V}| = 15$ entities, $|\mathcal{S}| = 15$ properties, and
context number $|\Theta| = 5$. After training the model, we measure the test loss across
three random seeds to evaluate our proposed method against the Standard MoE baseline.
Figure~\ref{fig:syn2_test} presents the performance comparison across varying numbers of
experts, $K \in \{10, 20, 30, 40\}$, and different top-$k$ selections,
$k \in \{1, 2, 3, 4, 5\}$. STAR consistently achieves lower test loss than Standard MoE
across all configurations and top-$k$ selections. The performance gap becomes more pronounced
as the number of experts increases, indicating that STAR scales more robustly with model size
and mitigates the degradation in routing quality commonly observed with larger expert pools.

\paragraph{Interpolation Balance} To investigate how $\alpha$ evolves to weight the GHA-driven gating against the learnable gating, we track the mean interpolation coefficient $\sigma(\alpha)$ during training. Recall that $\alpha$ controls the weighting between the two logits $l_{\text{linear}}$ and $l_{\text{GHA}}$.
Figure~\ref{fig:syn_anls}-(a), (b) show the slope of mean $\sigma(\alpha)$ across different
top-$k$ settings and its trajectory over epochs. We observe a consistent decrease in
$\sigma(\alpha)$ throughout training, indicating a uniform shift toward greater reliance on
the GHA-driven gating. This trend reflects the stabilization of hidden representations, which
enables the model to increasingly favor structure-aware subspace gating over gradient-driven
gating across all expert sizes and top-$k$ settings, confirming the robustness of this
adaptation behavior. See Appendix~\ref{app:syn_full_res} for full results.

\paragraph{Specialization and Load Balance}
Further, to understand whether the router becomes actually aware of the underlying input structure, and whether such input-awareness translates into stronger input--expert specialization, we examine the
correlation between routing decisions and the semantic components of the generated data. In our synthetic setup, the latent property $s_t$ is the primary source of token-level variation, making it
the most informative target for assessing whether the gating mechanism responds meaningfully to
changes in the input.
Let $e_t \in \{1,\ldots,K\}$ denote the expert selected for token $x_t$ under top-$1$ routing scenario. We quantify structure-aligned specialization via the mutual information between expert assignments and latent properties,
\vspace{-0.5em}
\begin{equation}
I(e, s) \;=\;
\sum_{e=1}^{K} \sum_{s \in \mathcal{S}} p(e,s)\,\log\!\frac{p(e,s)}{p(e)\,p(s)},
\end{equation}
$p(e,s)$ is the empirical joint distribution of expert--property pairs and $p(e)$ and $p(s)$ are its marginals. Higher values of $I(e,s)$ indicate experts specialize along the true variation
modes of the data rather than fragmenting arbitrarily. 

In parallel, we evaluate {how well each gating mechanism balances expert usage}, since MoE
performance mainly depends on preventing expert collapse. Let $p(e)$ denote the empirical fraction of tokens routed to expert $e$. We measure load-uniformity
using the normalized entropy
\begin{equation}
    H_{\mathrm{norm}}
    \;=\;
    -\,\frac{\sum_{e=1}^{K} p(e)\,\log p(e)}{\log K},
\end{equation}
which lies in $[0,1]$ and attains $1$ when all experts are used uniformly. These two complementary
metrics evaluate {how} well experts specialize ($I(e,s)$) and how evenly they are used ($H_{\mathrm{norm}}$), giving a more holistic view of routing behavior as the expert pool scales.

Figure~\ref{fig:syn_anls}-(c), (d) report both quantities across different $K$ for STAR and the Standard MoE baseline.
In panel (c), STAR consistently achieves higher mutual information $I(e,s)$, with the gap
widening as $K$ increases: Standard MoE improves up to $K=30$ but collapses at $K=40$
($I(e,s)=0.24$), whereas STAR continues to rise, reaching $0.98$. This indicates that STAR
benefits from increased routing capacity more reliably and maintains stable specialization even at
larger expert sizes.
The panel (d) shows the normalized load entropy $H_{\mathrm{norm}}$. Both methods are reasonably
balanced for smaller expert sizes, but Standard MoE becomes increasingly imbalanced as $K$ grows,
with $H_{\mathrm{norm}}$ falling sharply from $0.43$ at $K=30$ to $0.17$ at $K=40$. In contrast,
STAR maintains stable load balance across all configurations. Together, these results show that STAR's structure-aware routing scales to larger expert pools
on both axes, sustaining strong specialization and, as a consequence, well-distributed expert
utilization.

\subsection{The Role of Mixing $R$: Decoupling the Variance Hierarchy}\label{subsec:mixingR}

Here we analyze how the mixing matrix $R$ prevents GHA-driven routing from inheriting the variance
hierarchy of the GHA basis. We first characterize the per-expert routing energy, which captures how strongly each expert is activated in terms of the variances along the input's principal
directions (Lemma~\ref{lem:mixed_energy}). This lets
us contrast two analyzable cases: (i) no mixing ($R=I$), where the routing energy directly
inherits the variance hierarchy and concentrates on the leading components, making expert
collapse inevitable (Proposition~\ref{prop:noR}); and (ii) random orthonormal mixing, where this
spectral bias is removed and routing energy is equalized in expectation across experts
(Proposition~\ref{prop:randorthoR}). When $R$ is learnable, as in our default STAR setting, its
dynamics are intractable and preclude a tight analytical characterization without restrictive
assumptions. We instead evaluate the resulting routing energies empirically and show that a
learnable $R$ likewise avoids the degenerate concentration of the $R=I$ case (Figure~\ref{fig:comb_re}).

\begin{restatable}[Per-expert routing energy]{lemma}{lemmamixedenergy}\label{lem:mixed_energy}
Let $x \in \mathbb{R}^d$ be a zero-mean random input with covariance
$\Sigma_x := \mathbb{E}[xx^\top]$ and 
${V}\in\mathbb{R}^{K\times d}$ have as its rows the
top-$K$ orthonormal eigenvectors of $\Sigma_x$, with eigenvalues
$\lambda_1 \ge \dots \ge \lambda_K$.
Define $y:= V x$, then
\[
\mathrm{Cov}(y)=V\Sigma_x V^\top =: \Lambda = \mathrm{diag}(\lambda_1,\dots,\lambda_K).
\]
Let $ R\in\mathbb{R}^{K\times K}$ and $ Z:= R V$ where $r_k^\top$ is the $k$-th row of $ R$. For logits $\ell(x):= Z x$, define logit energy per expert $k$
\[
\mathcal{L}_k \;:=\; \mathbb{E}_{x}\big[\ell_k(x)^2\big].
\]
Then we can express this as routing energy
\begin{equation}
\mathcal{L}_k \;=\; \mathbb{E}_x\!\left[( r_k^\top y)^2\right] \;=\;  r_k^\top \Lambda  r_k. 
\end{equation}
Equivalently, $\mathcal{L}_k=\sum_{i=1}^K \lambda_i\, r_{k,i}^2$ is a weighted sum of eigenvalues $\{\lambda_i\}$ with weights $ r_{k,i}^2$. (Proof in Appendix~\ref{app:derivations})
\end{restatable}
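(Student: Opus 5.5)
The argument is a direct second-moment computation in two stages: first identify the covariance of the projected coordinates $y = Vx$, then rewrite each logit as a linear functional of $y$ and take expectations.

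\textbf{Covariance of $y$.} Since $\mathbb{E}[x]=0$, linearity gives $\mathbb{E}[y]=V\mathbb{E}[x]=0$. Hence
\[
\mathrm{Cov}(y)=\mathbb{E}[yy^\top]=V\,\mathbb{E}[xx^\top]V^\top=V\Sigma_x V^\top .
\]
Write $v_i^\top$ for the rows of $V$. These are orthonormal eigenvectors with $\Sigma_x v_j=\lambda_j v_j$, so the $(i,j)$ entry is
\[
v_i^\top \Sigma_x v_j=\lambda_j\, v_i^\top v_j=\lambda_j\delta_{ij}.
\]
Therefore $V\Sigma_xV^\top=\Lambda=\mathrm{diag}(\lambda_1,\dots,\lambda_K)$. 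Symmetry of $\Sigma_x$ is not even needed beyond the eigen-relation itself.

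\textbf{Logit energy.} From $Z=RV$ we get $\ell(x)=Zx=R(Vx)=Ry$, so the $k$-th logit is $\ell_k(x)=r_k^\top y$. Squaring and taking expectations,
\[
\mathcal{L}_k=\mathbb{E}\big[(r_k^\top y)^2\big]=\mathbb{E}\big[r_k^\top yy^\top r_k\big]=r_k^\top\,\mathbb{E}[yy^\top]\,r_k=r_k^\top\Lambda r_k .
\]
The step pulling the deterministic vector $r_k$ outside the expectation is justified because $R$ is treated as fixed, i.e.\ not a function of $x$. Expanding the quadratic form with diagonal $\Lambda$ then gives the equivalent weighted-sum expression $\sum_i \lambda_i r_{k,i}^2$.

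\textbf{Where care is needed.} The computation itself is routine. The only real obstacle is making the standing assumptions explicit:
\begin{itemize}
\item $x$ must have finite second moments, so that $\Sigma_x$ and every $\mathcal{L}_k$ exist.
\item $V$ is taken to be the exact top-$K$ eigenbasis, the idealized limit of GHA. For an approximate $V$ from GHA, $\mathrm{Cov}(y)$ is no longer diagonal, and only the general form $r_k^\top V\Sigma_xV^\top r_k$ holds.
\end{itemize}
I would state both points as remarks rather than as part of the proof.
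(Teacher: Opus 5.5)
Your proposal is correct and follows essentially the same route as the paper's proof: write $\ell_k(x)=r_k^\top Vx=r_k^\top y$, pull the fixed $r_k$ out of $\mathbb{E}[r_k^\top yy^\top r_k]$, identify $\mathbb{E}[yy^\top]=V\Sigma_xV^\top=\Lambda$, and expand the diagonal quadratic form. Your entrywise check $v_i^\top\Sigma_x v_j=\lambda_j\delta_{ij}$ simply spells out the diagonalization step that the paper asserts directly from orthonormality and the eigenbasis property.
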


\vspace{0.3em}
\begin{restatable}[Routing energy with no $ R$]{proposition}{propnoR}\label{prop:noR}
Under the setting of Lemma~\ref{lem:mixed_energy}, we define an empirical diagonal covariance estimate
\vspace{-0.5em}
\[
\hat{\Lambda}\;=\;\mathrm{diag}(\hat{\lambda}_1,\dots,\hat{\lambda}_K)
\]
derived from samples $y={\widehat V}x$
(e.g., $\hat{\lambda}_i=\widehat{\mathrm{Var}}(y_i)$), where ${\widehat V}$ is an incremental GHA estimate of $ V$. We use $\hat \Lambda$ as a consistent plug-in estimator for $\Lambda$.
In the no mixing case, let $ R=I$ (i.e., $ Z=\widehat{{V}}$) where $ r_k=e_k$ with $\|r_k\|_2=1$. Then the empirical routing energy of $k$-th expert is
\begin{equation}\label{eq:noR}
\hat{\mathcal{L}}_k
\;:=\;
e_k^\top \hat{\Lambda} e_k
\;=\;
\hat{\lambda}_k
\end{equation}
\end{restatable}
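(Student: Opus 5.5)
The plan is to reduce the statement to Lemma~\ref{lem:mixed_energy} with the population quantities replaced by their empirical plug-in counterparts. The argument has three steps.

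\textbf{Step 1: apply the lemma.} Lemma~\ref{lem:mixed_energy} gives $\mathcal{L}_k = r_k^\top \Lambda r_k$ for any mixing matrix $R$ with rows $r_k^\top$. With $R=I$ we have $Z=\widehat V$ and $r_k=e_k$. The empirical routing energy of expert $k$ is the sample second moment of $\ell_k(x)=e_k^\top \widehat V x = y_k$. This equals $e_k^\top \widehat{\mathrm{Cov}}(y)\, e_k$, where $y=\widehat V x$. Centering is not needed here: $x$ is zero-mean, so $y$ is as well.

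\textbf{Step 2: justify the diagonal form of $\hat\Lambda$.} Because $\widehat V$ is a GHA estimate, its rows are (approximately) orthonormal eigenvectors of $\Sigma_x$. The off-diagonal entries of $\widehat{\mathrm{Cov}}(y)$ therefore estimate $v_i^\top\Sigma_x v_j=0$ for $i\neq j$. The statement sidesteps this by defining $\hat\Lambda$ as the diagonal of the empirical covariance, $\hat\lambda_i=\widehat{\mathrm{Var}}(y_i)$, and using it as a consistent plug-in for $\Lambda$. I will take the definition as given. The key observation is that off-diagonal entries play no role when the quadratic form is evaluated at a standard basis vector: $e_k^\top A e_k = A_{kk}$ for any matrix $A$. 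So the full empirical covariance and its diagonal part give the same energy, and the replacement is harmless.

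\textbf{Step 3: evaluate the quadratic form.} Computing $e_k^\top\hat\Lambda e_k=\sum_i \hat\lambda_i e_{k,i}^2=\hat\lambda_k$ gives \eqref{eq:noR}, using $\|e_k\|_2=1$. To connect this to the population picture, I will add a remark. By consistency, $\hat\lambda_k\to\lambda_k$, which requires GHA convergence of $\widehat V$ to the top-$K$ eigenvectors, as in Sanger's result and the setting of Definition~\ref{def:input-structure}. It follows that the ordering $\hat{\mathcal L}_1\ge\dots\ge\hat{\mathcal L}_K$ holds asymptotically. This is the variance hierarchy the subsection refers to.

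\textbf{Main obstacle.} The computation itself is immediate. The only delicate point is the status of $\hat\Lambda$ as a diagonal consistent estimator, which depends on GHA convergence. I will treat that as an assumption built into the proposition rather than prove it, noting as in Step 2 that the identity $\hat{\mathcal L}_k=\hat\lambda_k$ holds exactly regardless.
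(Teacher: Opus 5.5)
Your proposal is correct and follows essentially the same route as the paper's proof: set $R=I$ so that $\ell_k(x)=y_k$, identify the routing energy with the empirical second moment $\widehat{\mathrm{Var}}(y_k)$, and read off $\hat\lambda_k$ as the $k$-th diagonal entry of $\hat\Lambda$. Your remark that $e_k^\top A e_k = A_{kk}$ for any matrix $A$ is a small but real improvement. The paper assumes the empirical covariance of $y=\widehat V x$ is exactly diagonal, whereas your observation makes the identity hold without that assumption.
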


\begin{restatable}[Routing energy with random orthonormal $ R$]{proposition}{proprandorthoR}\label{prop:randorthoR}
Let the identical setup as Proposition~\ref{prop:noR}. Given $R$ is orthonormal such that $\mathbb{E} [r_k  r_k^\top]=\frac{1}{K}I$,
we obtain
\begin{equation}\label{eq:initR}
\mathbb{E}[\hat{\mathcal{L}}_k]
\;=\;
\mathbb{E}\;\![ r_k^\top \hat{\Lambda}  r_k]
\;=\;
\frac{1}{K}\sum_{i=1}^K \hat{\lambda}_i,
\quad \forall k.
\end{equation}
\end{restatable}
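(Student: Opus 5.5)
The plan is to reduce the expectation to a trace computation using the quadratic form identity from Lemma~\ref{lem:mixed_energy}. I will treat $\hat{\Lambda}$ as fixed, i.e.\ conditioned on the data and the GHA estimate $\widehat V$, and take the expectation only over the random orthonormal mixing matrix $R$. This is the reading the statement intends, since the right-hand side still involves the $\hat\lambda_i$. I will also assume $R$ is drawn independently of the samples that define $\hat\Lambda$, for instance uniformly (Haar) on the orthogonal group, or as any random orthonormal matrix satisfying the stated second-moment condition $\mathbb{E}[r_k r_k^\top]=\frac1K I$.

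The key steps, in order, are as follows.

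\textbf{Step 1.} By Proposition~\ref{prop:noR} and the plug-in convention, the empirical routing energy of expert $k$ under mixing $R$ is $\hat{\mathcal L}_k = r_k^\top \hat\Lambda r_k$. This is the same quadratic form as in Lemma~\ref{lem:mixed_energy}, with $\Lambda$ replaced by $\hat\Lambda$.

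\textbf{Step 2.} I will rewrite the scalar as a trace,
\[
r_k^\top\hat\Lambda r_k=\mathrm{tr}\bigl(\hat\Lambda\, r_k r_k^\top\bigr).
\]

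\textbf{Step 3.} Using linearity of trace and expectation, together with independence of $R$ from $\hat\Lambda$, I will get
\[
\mathbb{E}[\hat{\mathcal L}_k]=\mathrm{tr}\bigl(\hat\Lambda\,\mathbb{E}[r_k r_k^\top]\bigr)=\tfrac1K\,\mathrm{tr}(\hat\Lambda)=\tfrac1K\sum_{i=1}^K\hat\lambda_i .
\]
Equivalently, in coordinates, $\mathbb{E}[\hat{\mathcal L}_k]=\sum_i \hat\lambda_i\,\mathbb{E}[r_{k,i}^2]$ with $\mathbb{E}[r_{k,i}^2]=1/K$. The cross terms $\hat\Lambda_{ij}$ for $i\neq j$ vanish because $\hat\Lambda$ is diagonal, so only the diagonal second moments are actually needed. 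The right-hand side does not depend on $k$, which gives the claim for all $k$.

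The main obstacle is justifying the second-moment hypothesis, if one wants to derive it rather than assume it. For Haar-distributed $R$, each row $r_k$ is uniform on the unit sphere $S^{K-1}$. Invariance under coordinate permutations and sign flips then forces $\mathbb{E}[r_k r_k^\top]=cI$, and taking the trace of $r_k r_k^\top$ together with $\|r_k\|_2=1$ gives $c=1/K$. I will include this short symmetry argument as a remark. A secondary subtlety is that if $R$ were learned from, or correlated with, the data, the expectation would no longer factor through $\hat\Lambda$. The independence/conditioning assumption above is exactly what rules this out.
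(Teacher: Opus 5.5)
Your proof is correct and matches the paper's argument essentially line for line: rewrite $r_k^\top\hat\Lambda r_k=\mathrm{tr}(\hat\Lambda\, r_k r_k^\top)$, move the expectation over $R$ inside the trace, and use $\mathbb{E}[r_k r_k^\top]=\frac{1}{K}I$ to obtain $\frac{1}{K}\mathrm{tr}(\hat\Lambda)$. Your extra remarks make explicit what the paper leaves implicit without changing the argument: that $\hat\Lambda$ is held fixed, i.e.\ $R$ is independent of the data, and that the second-moment condition follows from Haar symmetry.
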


Taken together, Propositions~\ref{prop:noR} and~\ref{prop:randorthoR} show that without mixing ($ R=I$), routing energy is intrinsically tied to the PCA spectrum of corresponding order and thus exhibits systematic imbalance across experts, whereas random orthonormal mixing removes this bias by equalizing routing energy in expectation across experts.
This shows that the key role of $R$ is to decouple expert selection from hierarchical variance ordering in input space.

\begin{figure}[h]
    \centering
    \includegraphics[width=0.49\textwidth]{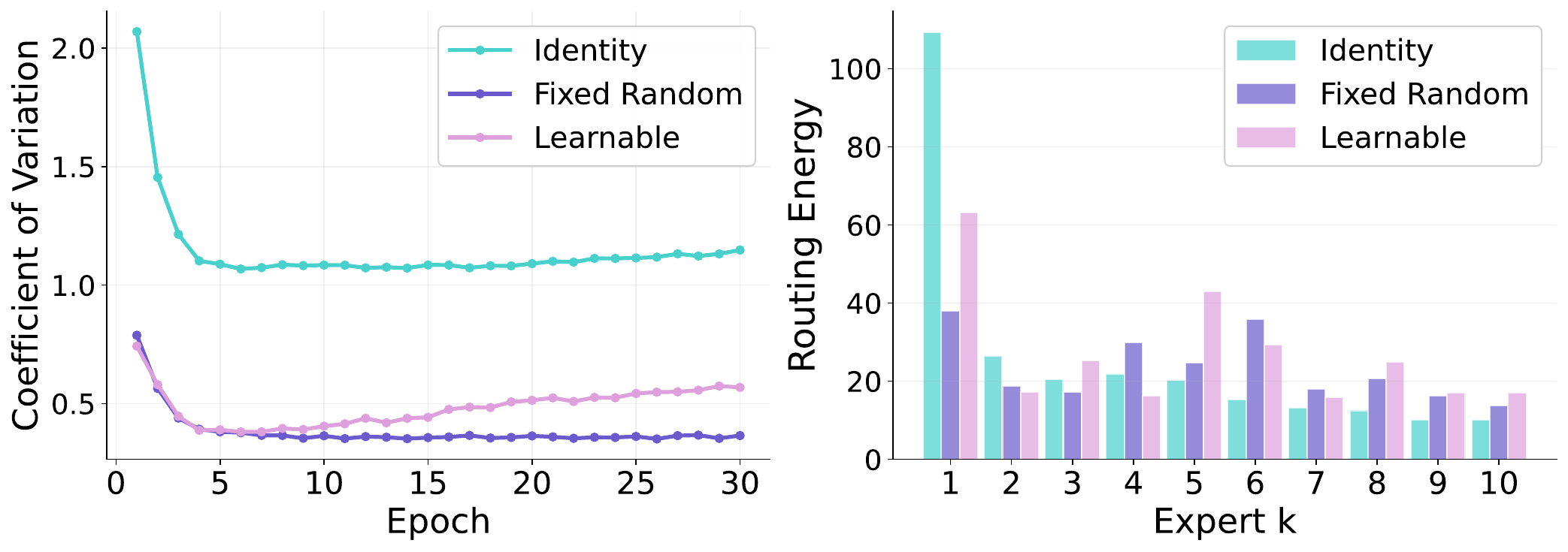}
    \caption{
    {Routing energy analysis across varying $R$.}
    {Left:} Coefficient of variation (CV) of per-expert routing energy over training, higher values
    indicate energy concentrated on fewer experts.
    {Right:} Per-expert routing energy distributions at the final epoch for $R=I$, fixed random
    orthonormal, and learnable $R$.
    }
    \label{fig:comb_re}
\vspace{-1.5em}
\end{figure}

Motivated by this contrast, we examine whether a learnable $R$ avoids the same collapse in
practice. \Cref{fig:comb_re} reports the empirical per-expert routing energy
$\mathbb{E}[\ell_k(x)^2]$ in the synthetic setup with $K=10$. In the left panel, direct PC
routing ($R=I$) maintains a consistently high CV, reflecting strong expert dominance, whereas
both fixed random and learnable $R$ substantially reduce it. The right panel shows the
per-expert energy distribution at the final epoch, in which mixing with $R$ (fixed or learnable) yields a far more even spread than the highly skewed $R=I$ case.

\begin{table*}[t]
\centering
\caption{Training curves and zero-shot performance of different MoE routing methods on
LLaMA-MoE at the 182M and 469M scales. Left: training loss on the Pile. Right: zero-shot
accuracy. All methods, including STAR, are trained with the auxiliary load-balancing loss. Best results in \textbf{bold} and second-best results \secondbest{underlined}.}
\footnotesize
\resizebox{0.93\textwidth}{!}{%
\begin{minipage}{0.24\textwidth}
    \centering
    \includegraphics[width=\linewidth]{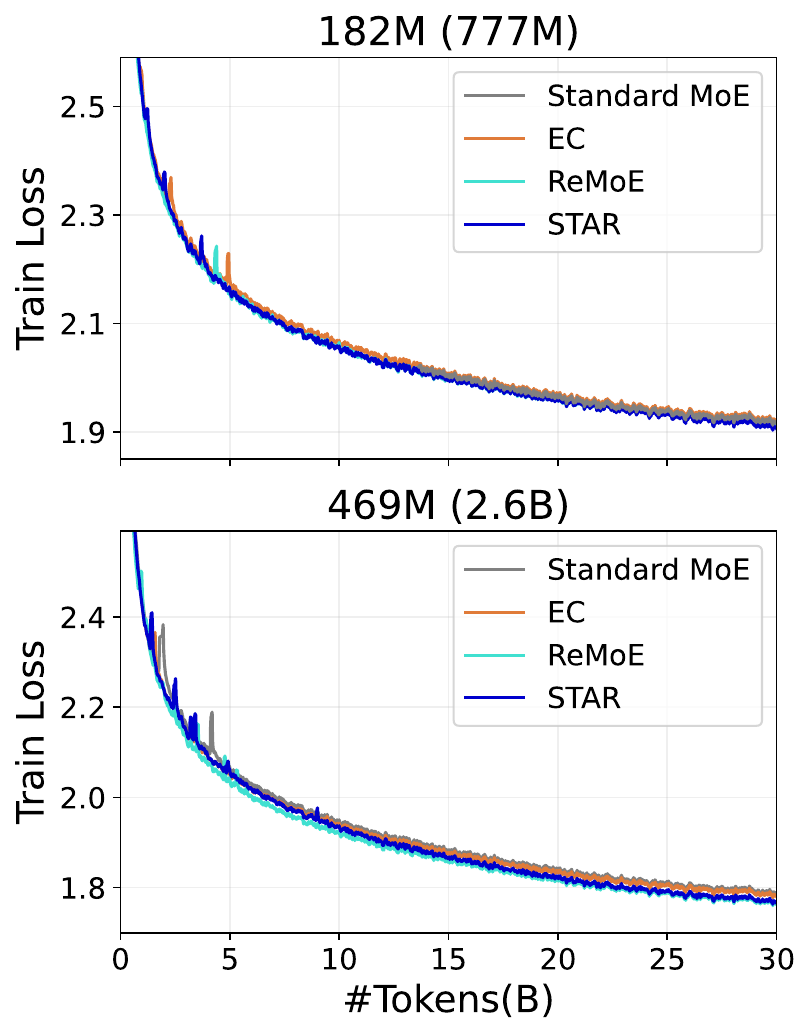}
\end{minipage}
\hspace{1.5em}
\begin{minipage}{0.70\textwidth}
    \centering
    \small
    \setlength{\tabcolsep}{4.5pt}
    \begin{tabular}{@{}lcccccccc@{}}
    \toprule
    {Model} & {ARC-c} & {ARC-e} & {BoolQ} & {HellaSwag} & {LAMBADA} & {PIQA} & {RACE} & {Avg} \\
    \midrule
    \multicolumn{9}{@{}l}{\textit{Active 182M / 777M}} \\
    Standard MoE & 24.15 & 40.78 & \underline{59.88} & 33.56 & \underline{33.98} & {64.25} & 27.94 & \underline{40.65} \\
    EC    & \underline{24.40} & \underline{41.96} & 56.91 & 33.28 & 32.00 & 64.20 & 28.13 & 40.13 \\
    ReMoE & \underline{24.40} & \textbf{43.27} & 50.28 & \underline{34.16} & 33.38 & \underline{64.36} & \textbf{29.76} & 39.94 \\
      \cellcolor{qdelta}\textbf{STAR}
      & \cellcolor{qdelta}\textbf{26.37}
      & \cellcolor{qdelta}{40.15}
      & \cellcolor{qdelta}\textbf{60.40}
      & \cellcolor{qdelta}\textbf{34.26}
      & \cellcolor{qdelta}\textbf{34.43}
      & \cellcolor{qdelta}\textbf{64.85}
      & \cellcolor{qdelta}\underline{28.71}
      & \cellcolor{qdelta}\textbf{41.31} \\
    \midrule
    \multicolumn{9}{@{}l}{\textit{Active 469M / 2.6B}} \\
    Standard MoE & 24.23 & {44.61} & 52.75 & 38.98 & \underline{40.15} & 67.08 & \textbf{31.00} & 42.69 \\
    EC           & 23.81 & 43.81 & \textbf{56.82} & 37.80 & 40.02 & 67.19 & 30.24 & 42.81 \\
    ReMoE        & \textbf{25.85} & \textbf{46.46} & 53.58 & \textbf{40.86} & 39.12 & \underline{67.79} & 29.38 & \underline{43.29} \\
    \cellcolor{qdelta}\textbf{STAR}
    & \cellcolor{qdelta}\underline{25.17}
    & \cellcolor{qdelta}\underline{45.58}
    & \cellcolor{qdelta}\underline{56.21}
    & \cellcolor{qdelta}\underline{39.95}
    & \cellcolor{qdelta}\textbf{42.11}
    & \cellcolor{qdelta}\textbf{67.85}
    & \cellcolor{qdelta}\underline{30.62}
    & \cellcolor{qdelta}\textbf{43.93} \\
    \bottomrule
    \end{tabular}
\end{minipage}
\label{tbl:llama_zeroshot}
}
\end{table*}

\section{Main Results on Real-Data Experiments}\label{sec:realexp}

\subsection{Zero-Shot Evaluation on Pretrained LLM-MoE}
\label{subsec:llama_pretrain}
To assess whether STAR improves large-scale autoregressive modeling, we pretrain LLaMA-MoE
backbones at two scales, with 182M and 469M active parameters (out of 777M and 2.6B total,
respectively). Each model uses a standard LLaMA-style decoder (GQA, SwiGLU, RoPE, RMSNorm),
with every feed-forward block replaced by an MoE layer of $E{=}8$ experts under Top-1 routing,
matching the compute budget of the corresponding dense model. Following the pretraining setup of
ReMoE~\citep{wang2025remoe}, all models are trained from scratch on the
Pile~\citep{gao2020pile800gbdatasetdiverse} for 30B tokens with sequence length 1024 and batch
size 512. We compare STAR against three routing baselines: standard Top-1
MoE~\citep{fedus2022switchtransformersscalingtrillion}, Expert-Choice (EC)
routing~\citep{zhou2022ec}, and ReMoE~\citep{wang2025remoe}.

In this large-scale pretraining setup, all methods, including STAR, are trained with the
auxiliary load-balancing loss~\citep{shazeer2017outrageouslylargeneuralnetworks}, as balanced
expert utilization is important for stable large-scale MoE training. Applying it uniformly isolates the effect of structure-aware
routing and confirms that STAR is complementary to explicit load balancing. We note that this is the only setting in which a balancing regularizer is applied and all other
experiments train STAR without one. For STAR we set the GHA iteration count to $m{=}1$ for
efficient training.

Following pretraining, we evaluate all models in a purely zero-shot manner on seven commonsense
and reading-comprehension tasks: ARC-c, ARC-e, BoolQ, HellaSwag, LAMBADA, PIQA, and RACE.
Table~\ref{tbl:llama_zeroshot} reports zero-shot accuracies and the corresponding training-loss curves. At both scales, STAR attains the highest average
accuracy, improving over standard Top-1 MoE, EC, and ReMoE, and consistently tracks the lowest
training loss throughout pretraining. While no single method dominates every individual task,
STAR is the most consistent across tasks with either best or second-best results per task, indicating that structure-aware gating
promotes effective expert specialization during large-scale pretraining.

\begin{table*}[t]
\centering
\scriptsize
\caption{{Performance comparison on 5 subtasks on GLUE benchmark. All results averaged across 3 random seeds}. $^1$:~\citet{guo2025dynamicmixtureexpertsautotuning}}
\resizebox{0.84\textwidth}{!}{%
\begin{tabular}{lcccccc}
\toprule
Algorithms (K, k) & CoLA & MRPC & QNLI & MNLI & RTE & Average \\
\midrule
DynMoE (9, 7.1)$^1$ & 65.17$\pm$0.26 & 90.64$\pm$0.26 & 92.59$\pm$0.08 & 86.37$\pm$0.13 & 73.41$\pm$1.96 & 81.64 \\
\midrule
Cosine Router (8,1)$^1$  & 64.10$\pm$0.94 & 90.14$\pm$0.60 & 92.48$\pm$0.21 & 86.56$\pm$0.06 & 73.04$\pm$2.13 & 81.26 \\
Cosine Router (8,4)$^1$  & 64.94$\pm$0.62 & 89.74$\pm$0.99 & 92.52$\pm$0.12 & 86.57$\pm$0.28 & 75.09$\pm$1.84 & 81.77 \\
\cellcolor{qdelta}\textbf{STAR} (8,1)  & \cellcolor{qdelta}65.54$\pm$0.47 & \cellcolor{qdelta}90.25$\pm$0.55 & \cellcolor{qdelta}92.56$\pm$0.23 & \cellcolor{qdelta}86.52$\pm$0.18 & \cellcolor{qdelta}74.61$\pm$0.74 &
\cellcolor{qdelta}{81.90} \\
\cellcolor{qdelta}\textbf{STAR} (8,4)  & \cellcolor{qdelta}66.62$\pm$0.65 & \cellcolor{qdelta}89.68$\pm$0.20 & \cellcolor{qdelta}92.61$\pm$0.10 & \cellcolor{qdelta}86.74$\pm$0.11 & \cellcolor{qdelta}75.57$\pm$0.68 &  
\cellcolor{qdelta}{82.24} \\
\midrule
Cosine Router (16,1)$^1$ & 63.63$\pm$0.20 & 89.81$\pm$0.30 & 92.39$\pm$0.21 & 86.63$\pm$0.17 & 74.01$\pm$0.29 & 81.29 \\
Cosine Router (16,4)$^1$ & 64.12$\pm$1.42 & 89.74$\pm$0.40 & 92.65$\pm$0.09 & 86.59$\pm$0.16 & 75.33$\pm$0.95 & 81.69 \\
\cellcolor{qdelta}\textbf{STAR} (16,1) & \cellcolor{qdelta}64.69$\pm$0.95 & \cellcolor{qdelta}90.03$\pm$0.34 & \cellcolor{qdelta}92.57$\pm$0.08 & \cellcolor{qdelta}86.64$\pm$0.06 & \cellcolor{qdelta}73.65$\pm$0.62 & \cellcolor{qdelta}{81.52} \\
\cellcolor{qdelta}\textbf{STAR} (16,4) & \cellcolor{qdelta}65.74$\pm$1.11 & \cellcolor{qdelta}90.20$\pm$0.37 & \cellcolor{qdelta}92.61$\pm$0.05 & \cellcolor{qdelta}86.59$\pm$0.07 & \cellcolor{qdelta}75.41$\pm$0.68 & \cellcolor{qdelta}{82.11} \\
\midrule
\multicolumn{7}{l}{{Ablations on STAR}} \\
\midrule
{No $R$} (8,4) 
  & 64.57$\pm$0.82
  & 90.12$\pm$0.50
  & 92.54$\pm$0.04 
  & 86.38$\pm$0.06 
  & 72.56$\pm$1.77
  & {81.23} \\ 
{No Interpolation} (8,4)
  & 66.02$\pm$1.32 
  & 89.56$\pm$0.31 
  & 92.39$\pm$0.17
  & 86.51$\pm$0.12 
  & 73.53$\pm$2.45
  & {81.60} \\ 
{Random basis} (8,4) & 65.80$\pm$0.69 & 90.20$\pm$0.67 & 92.54$\pm$0.25 & 75.52$\pm$15.85 & 73.89$\pm$2.47 &  79.59 \\
\bottomrule
\end{tabular}
\label{tbl:glue}
}
\end{table*}

\subsection{Finetuning for MoE-Augmented BERT on GLUE}

\label{subsec:rexp-lang}
In this experiment, we follow the MoEfication setup~\citep{zhang2022moeficationtransformerfeedforwardlayers, qiu2024unlockingemergentmodularitylarge} and finetune BERT-large~\citep{devlin2019bert} models on the GLUE benchmark~\citep{wang2019gluemultitaskbenchmarkanalysis}. Specifically, we evaluate on CoLA~\cite{warstadt2019neuralnetworkacceptabilityjudgments}, QNLI~\cite{wang2019gluemultitaskbenchmarkanalysis}, RTE~\cite{bentivogli2009fifth}, MNLI~\cite{xu-etal-2020-clue}, and MRPC~\cite{dolan-brockett-2005-automatically} under multiple MoE settings with varying $K$ and top-$k$ selections. 
We compare STAR against two representative MoE routing baselines. The cosine router~\citep{li2023sparsemixtureofexpertsdomaingeneralizable} replaces linear gating with cosine similarity between inputs and expert embeddings, improving routing stability through normalized similarity scoring. DynMoE~\citep{guo2025dynamicmixtureexpertsautotuning} dynamically adjusts expert utilization by auto-tuning experts number during training, representing an adaptive MoE baseline. All results are averaged over three random seeds.

Table~\ref{tbl:glue} shows that STAR achieves consistent improvements in average accuracy across
all $(K,k)$ settings. At $E{=}8$, STAR outperforms both the cosine router and DynMoE at every
top-$k$, with the best result at $(8,4)$ reaching $82.24\%$, compared to $81.77\%$ for the cosine
router and $81.64\%$ for DynMoE. The same trend holds at $E{=}16$, where STAR improves over the
cosine router at matched $(K,k)$ ($81.52$ vs.\ $81.29$ at $k{=}1$ and $82.11$ vs.\ $81.69$ at
$k{=}4$), confirming that the gains persist as the expert pool grows.

\paragraph{Ablations}
We study the contributions of STAR’s key components by ablating the mixing matrix ${R}$, the interpolation with the learnable gating matrix, and the use of an evolving subspace. As shown in Table~\ref{tbl:glue}, removing the mixing matrix ${R}$ and directly using the GHA basis as gating vectors consistently degrades performance, indicating that basis mixing is essential for effective routing. Eliminating interpolation and relying solely on the unsupervised basis also reduces accuracy, reflecting the importance of task-dependent signals provided by the gradient-guided gating. 
Finally, replacing the evolving GHA subspace with a fixed random orthonormal basis causes the largest drop ($79.59\%$) and high instability (e.g., MNLI
$75.52\pm15.85$), showing that STAR's gains arise from data-driven, incremental subspace learning. Together, these results confirm that all three components
are necessary for STAR to achieve its full capacity.

\subsection{Test Time Adaptation}

\paragraph{GLUE-X for Language OOD Generalization}
\begin{figure}[h]
    \vspace{-0.5em}
    \centering
    \tiny
    \includegraphics[width=\linewidth]{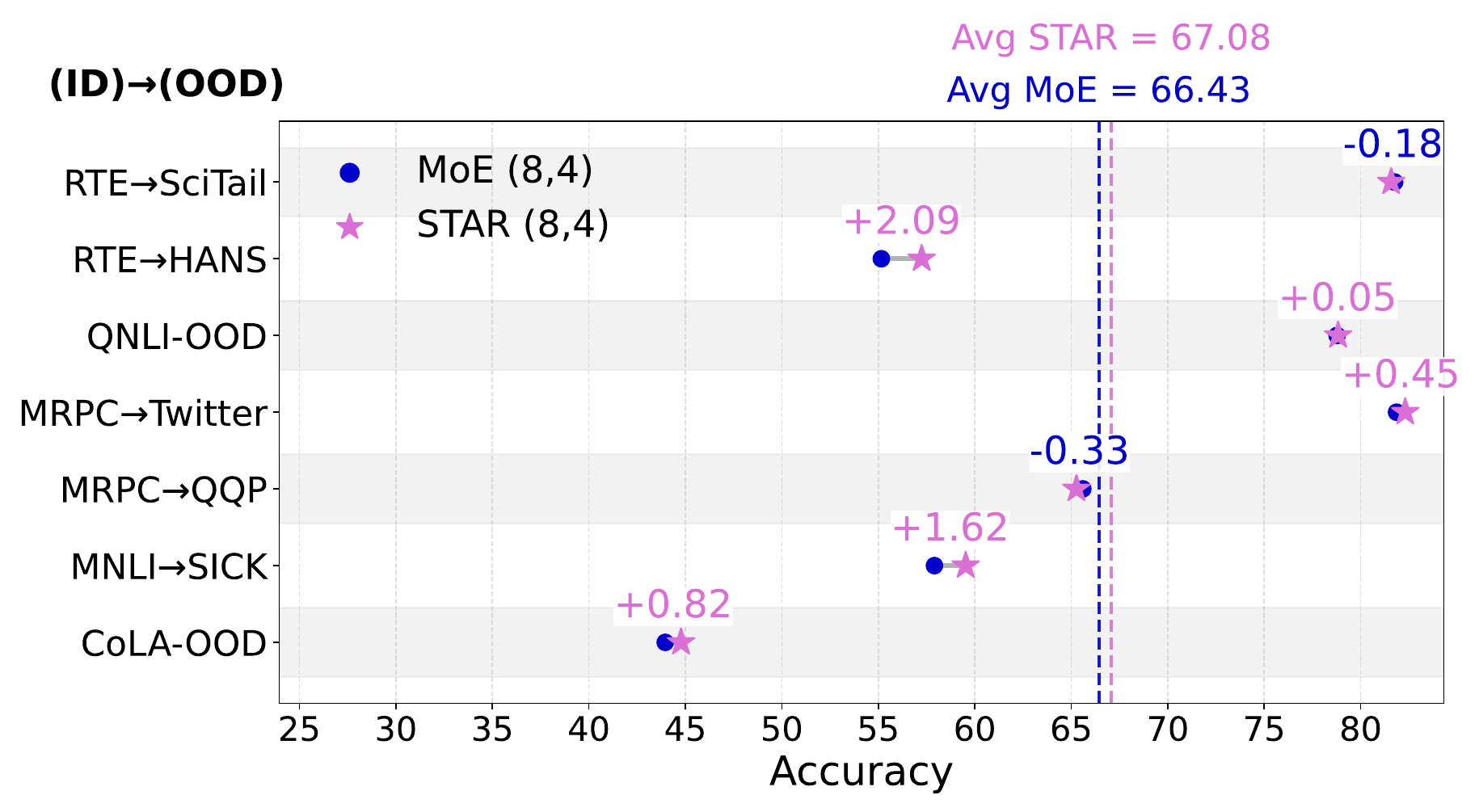}
    \vspace{-2em}
    \caption{\textbf{OOD Performance comparison on GLUE-X.} The plot shows per-task accuracy differences between MoE and STAR, along with average improvements.}
    \label{fig:gluex_ood}
\end{figure}

\begin{table*}[t]
\centering
\caption{Comparison of ImageNet-C top-1 accuracy (\%) on ViT-S/32 across 15 corruption types at severity levels 1, 3, and 5.}
\label{tab:imagenetc_grouped}
\resizebox{1.0\textwidth}{!}{%
\begin{tabular}{lccc cccc cccc cccc c}
\toprule
& \multicolumn{3}{c}{\textbf{Noise}}
& \multicolumn{4}{c}{\textbf{Blur}}
& \multicolumn{4}{c}{\textbf{Weather}}
& \multicolumn{4}{c}{\textbf{Digital}}
& \multirow{2}{*}{\textbf{Avg}} \\
\cmidrule(lr){2-4}
\cmidrule(lr){5-8}
\cmidrule(lr){9-12}
\cmidrule(lr){13-16}

\textbf{Model}
& Gauss & Shot & Impulse
& Defocus & Glass & Motion & Zoom
& Snow & Frost & Fog & Bright
& Contrast & Elastic & Pixelate & JPEG
&  \\
\midrule
\multicolumn{17}{l}{\textsc{Severity 1}} \\

Standard MoE
& {59.66} & 57.44 & 55.50
& 50.81 & 53.35 & 59.92 & 43.10
& 51.84 & 56.74 & \textbf{65.34} & 69.34
& \textbf{70.24} & 62.32 & 61.85 & 59.81
& 58.48 \\
\cellcolor{qdelta}\textbf{STAR}
& \cellcolor{qdelta}\textbf{60.67} & \cellcolor{qdelta}\textbf{58.84} & \cellcolor{qdelta}\textbf{57.16} & \cellcolor{qdelta}51.08 & \cellcolor{qdelta}53.74 & \cellcolor{qdelta}60.72 & \cellcolor{qdelta}44.85
& \cellcolor{qdelta}{51.91} & \cellcolor{qdelta}{56.60} & \cellcolor{qdelta}{64.86}
& \cellcolor{qdelta}{69.57} & \cellcolor{qdelta}{69.86} & \cellcolor{qdelta}\textbf{62.92} & \cellcolor{qdelta}62.08 & \cellcolor{qdelta}59.57
& \cellcolor{qdelta}58.96 \\
\textbf{STAR (TTA)}
& {60.24} & {58.56} & {56.24}
& \textbf{51.23} & \textbf{54.20} & \textbf{60.87} & \textbf{45.55}
& \textbf{51.94} & \textbf{57.21} & {65.24} & \textbf{69.80}
& {69.98} & {62.78} & \textbf{62.13} & \textbf{60.08}
& \textbf{59.07} \\

\\[-2.2ex]
\multicolumn{17}{l}{\textsc{Severity 3}} \\
Standard MoE
& 42.94 & 38.74 & 39.77
& \textbf{32.67} & 21.36 & 39.69 & 27.97
& 32.69 & 30.46 & {54.92} & 65.95
& \textbf{65.72} & 55.31 & 49.21 & 54.56
& 43.46 \\
\cellcolor{qdelta}\textbf{STAR}
& \cellcolor{qdelta}43.61 & \cellcolor{qdelta}38.87 & \cellcolor{qdelta}40.63 & \cellcolor{qdelta}{32.44} & \cellcolor{qdelta}{21.85} & \cellcolor{qdelta}\textbf{41.76} & \cellcolor{qdelta}30.17
& \cellcolor{qdelta}32.93 & \cellcolor{qdelta}31.00 & \cellcolor{qdelta}55.31 & \cellcolor{qdelta}\textbf{66.56}
& \cellcolor{qdelta}65.51 & \cellcolor{qdelta}55.61 & \cellcolor{qdelta}49.81 & \cellcolor{qdelta}\textbf{54.79}
& \cellcolor{qdelta}44.06 \\
\textbf{STAR (TTA)}
& \textbf{43.76} & \textbf{38.87} & \textbf{40.65}
& {32.46} & \textbf{21.86} & {41.74} & \textbf{30.19}
& \textbf{32.97} & \textbf{31.43} & \textbf{55.54} & {66.55}
& 65.50 & \textbf{55.93} & \textbf{49.83} & {54.77}
& \textbf{44.14} \\

\\[-2.2ex] 
\multicolumn{17}{l}{\textsc{Severity 5}} \\

Standard MoE
& \textbf{14.85} & \textbf{13.57} & \textbf{12.54}
& {15.47} & 10.17 & 12.54 & 17.27
& 14.49 & 23.66 & 35.20 & 56.11
& 34.08 & 20.17 & 19.57 & 39.82
& 22.63 \\
\cellcolor{qdelta}\textbf{STAR}
& \cellcolor{qdelta}14.80 & \cellcolor{qdelta}12.91 & \cellcolor{qdelta}12.46 & \cellcolor{qdelta}14.98 & \cellcolor{qdelta}10.91 & \cellcolor{qdelta}19.22 & \cellcolor{qdelta}\textbf{18.59}
& \cellcolor{qdelta}16.85 & \cellcolor{qdelta}23.74 & \cellcolor{qdelta}36.52 & \cellcolor{qdelta}57.95
& \cellcolor{qdelta}\textbf{36.54} & \cellcolor{qdelta}21.05 & \cellcolor{qdelta}21.36 & \cellcolor{qdelta}{40.28}
& \cellcolor{qdelta}23.88 \\
\textbf{STAR (TTA)}
& 14.80 & 13.33 & 12.49
& \textbf{16.09} & \textbf{10.94} & \textbf{19.22} & {18.55}
& \textbf{16.94} & \textbf{24.72} & \textbf{37.78} & \textbf{57.96}
& {36.52} & \textbf{21.10} & \textbf{21.80} & \textbf{40.57}
& \textbf{24.19} \\
\bottomrule
\end{tabular}%
\label{tbl:imagenetc}
\vspace{-3em}
}
\end{table*}

So far, all experiments have been conducted with GHA updates {disabled} at test time, leaving the gating basis fixed after training.
In this section, we investigate the scenario where the unsupervised GHA updates are enabled during inference, 
allowing the router to adapt its basis to the test distribution in an online and unsupervised manner. 
Such adaptation is particularly important in OOD scenarios, where test data deviates from training and a fixed gating basis may fail to generalize on the new structure.

We evaluate STAR on GLUE-X, which extends the GLUE benchmark with corresponding OOD test sets for each task. Models are finetuned on the in-distribution (ID) task and then evaluated on its OOD variant. All results are averaged over three random seeds.
As shown in Figure~\ref{fig:gluex_ood}, STAR achieves higher test accuracy than the MoE baseline with cosine router, same architecture with MoE in Table~\ref{tbl:glue}, on most OOD datasets (5 out of 7) with average improvement 0.65\%p. 
Without any further parameter tuning, STAR reliably improves over the MoE baseline, indicating that enabling test-time adaptation through structure-aware gating provides more robust handling of distribution shifts.

\paragraph{ImageNet-C for Vision OOD}
We evaluate the TTA capability of STAR on ImageNet-C, a benchmark that applies 15 common image corruption types at multiple severity levels to ImageNet images~\citep{hendrycks2019benchmarkingneuralnetworkrobustness}. We employ a ViT-S/32~\citep{dosovitskiy2021imageworth16x16words} model pretrained on ImageNet-1k~\citep{dosovitskiy2021imageworth16x16words, touvron2021trainingdataefficientimagetransformers} and apply MoEfication following the integration strategy of GMoE~\citep{li2023sparsemixtureofexpertsdomaingeneralizable}, replacing the feed-forward blocks with MoE layers. Each MoE-augmented model is then finetuned on 20k randomly sampled images from ImageNet-1k under its corresponding routing method. After finetuning, evaluation is performed on ImageNet-C as an out-of-distribution (OOD) test set. For {STAR (TTA)}, we enable unsupervised GHA updates during inference so that the gating basis can adapt online to corrupted test inputs.

The results in~\Cref{tbl:imagenetc} shows that across all three corruption severities, STAR improves over the Standard MoE baseline, and enabling test-time adaptation yields further gains. At modest corruption levels (severity~1 and~3), default STAR consistently improves over Standard MoE, and STAR (TTA) achieves the highest average accuracies at both severities.
Under the most challenging corruption level (severity~5), STAR yields the largest improvement over Standard MoE, and STAR (TTA) achieves the highest overall accuracy (24.19\%). While TTA consistently provides additional gains across all severities, the improvements from STAR alone already indicate that its structure-aware routing contributes meaningful OOD resilience even without test-time updates.

\section{Conclusion}
Building on the view that effective expert
specialization requires the router to recognize the dominant structure of the input, STAR augments the learnable gate
with an evolving principal subspace estimated online via GHA. Across synthetic, large-scale pretraining, and fine-tuning settings, STAR consistently improves the downstream performance over standard MoE and recent routing baselines. In addition, STAR subspace can be
optionally updated at test time to further reinforce robustness under distribution shift.
A natural concern is whether the iterative GHA updates introduce computational overhead. We address
this analytically and empirically in Appendix~\ref{app:computation}, in which under typical MoE
configurations, the overhead is negligible relative to the dominant expert computation. Overall,
STAR offers a simple yet effective step toward structure-aware MoE routing that improves both
specialization and generalization.

\clearpage

\section*{Acknowledgements}
This work was partly supported by the Institute for Information \& Communications Technology Planning \& Evaluation (IITP) grants funded by the Korean government (MSIT) (No. RS-2026-25526850, High-Efficiency Neural Networks for Artificial General Intelligence, 33\%; No.2022-0-00857, Development of Financial and Economic Digital Twin Platform based on AI and Data, 33\%; No. RS-2025-25442149, LG AI STAR Talent Development Program for Leading Large-Scale Generative AI Models in the Physical AI Domain, 1\%), and Samsung Research Funding \& Incubation Center of Samsung Electronics under Project Number SRFC-IT2402-08, 33\%.

\section*{Impact Statement}
This work reframes the routing problem in Mixture-of-Experts (MoE) models, showing that effective
expert specialization depends on the router's ability to recognize and respond to structure in
the input rather than on the simplistic linear gating used in current designs. By making routing
explicitly structure-aware, STAR can help large MoE models use their expert capacity more
effectively and remain robust under distribution shift, which is increasingly important as MoE
architectures scale. More broadly, the perspective we develop, treating routing as online subspace
learning, suggests a principled, data-driven direction for designing and interpreting MoE routing
mechanisms. We do not foresee negative societal impacts specific to this work beyond those
already associated with large-scale models.

\bibliography{references}
\bibliographystyle{icml2026}

\newpage
\appendix
\onecolumn

\section{Computation Analysis}\label{app:computation}
In this section, we analyze the computational complexity of STAR in comparison to standard MoE gating mechanisms. 
We break down the cost of each stage, GHA basis updates, subspace mixing, and routing. 
The goal is to clarify the additional overhead introduced by GHA updates and demonstrate that it remains moderate relative to the overall cost of MoE forward and backward passes. 
Consider a mini-batch $X\in\mathbb{R}^{B\times d}$, hidden size $d$, number of experts (basis
components) $K$, routing fan-out $k$ with top-$k$ selection, expert's width $d'$, and $m$ GHA iterations per pass.

\paragraph{Standard Linear Gate.}
\begin{itemize}
    \item Batchwise computation of routing logits: $X {W}_g^\top$ costs $\mathcal{O}(B K d)$.
    \item The $k$ selected expert MLPs dominate with $\mathcal{O}(B k d d')$.
\end{itemize}
Thus the baseline batch cost is $\mathcal{O}(B K d)\;+\;\mathcal{O}(B k d d')$.

\paragraph{STAR.}
\begin{itemize}

\item {GHA updates:} Using the cached projection $Y={X}{V}^\top$ and prefix cumsums, each iteration
updates all $K$ rows in $\mathcal{O}(B K d)$ with $m$ iterations give $\mathcal{O}(m B K d)$.

\item {Basis mixing:} We form ${Z}={R}{V}$ once per pass, which results in $\mathcal{O}(K^2 d)$.

\item {Extra logits:} Given ${Z}$, computing the additional logits $l_\text{GHA} = X{Z}^\top$ costs $\mathcal{O}(B K d)$, and interpolation/softmax adds
$\mathcal{O}(B K)$.
\end{itemize}
All other operations (softmax/top-$k$, expert MLPs) are unchanged from the baseline.
Therefore, the batch cost of STAR is
\[
\underbrace{\mathcal{O}\!\big(B((m{+}2)K d)\big)}_{\text{GHA updates + extra logits}}
\;+\;
\underbrace{\mathcal{O}(K^2 d)}_{\text{mixing, once per pass}}
\;+\;
\underbrace{\mathcal{O}(B k d d')}_{\text{experts}}.
\]
Hence, the {incremental} overhead over the baseline is
\[
\mathcal{O}( (m+1) B K d)\;+\;\mathcal{O}(K^2 d).
\]
Under the typical STAR setup where the ratio of expert number ($K$) and model dimension ($d$) is around 1--4\%~\citep{fedus2022switchtransformersscalingtrillion, jiang2024mixtralexperts, aghdam2024damoedynamicexpertallocation, dai2024deepseekmoeultimateexpertspecialization, zhu2024llamamoebuildingmixtureofexpertsllama, qwen2025qwen25technicalreport}, we have $K\!\ll\! \text{min}(d, d')$.
In practice, we further adopt small number of iterations $m\!\in\!\{1,3\}$. Under these practical conditions,
\[
\mathcal{O}( (m+1) B K d)\;+\;\mathcal{O}(K^2 d)
\;\ll\; \mathcal{O}(B k d d').
\]
Therefore, the additional computation introduced by STAR is negligible compared to the dominant expert MLP computation in the base MoE architecture, and is effectively equivalent to increasing the expert width $d'$ by a small constant factor, $m+1$.

\paragraph{Empirical Analysis of Runtime and Memory.} 
Table~\ref{tbl:runtime_memory} reports runtime and memory usage statistics across different MoE models under $K=8$ and top-$k$=4 setting. For training, we measure (i) micro-step time, the latency of a single forward/backward micro-batch, (ii) optimization-step time, the latency per parameter update including gradient accumulation, and (iii) peak GPU memory usage during training. For inference, we report (i) latency per example, averaged over multiple runs after warmup, and (ii) peak GPU memory usage during inference.

We observe that STAR achieves nearly identical or even smaller training cost to the MoE with Standard MoE and cosine router, despite performing three iterative GHA updates per forward pass. DynMoE takes up around 4 times higher time to run a single step run, attributed to the explicit tracking of token routing frequency.  
This is because the baseline router itself requires additional computation for regularizing the similarity matrix or load balance, while GHA updates scale linearly with $K$ and $d$ without balancing term, resulting in negligible overhead compared to dominant MoE operations. 
Consequently, STAR provides improved routing performance without incurring severe extra cost during training. 
At inference time, when test-time GHA updates are disabled, STAR incurs only minimal latency and memory overhead by reusing the fixed gating basis derived during training, effectively matching the efficiency of the baseline.
If test-time GHA updates are enabled, inference latency increases due to additional online updates, but this is optional and provides robustness under distribution shift. 
Overall, STAR offers a favorable trade-off, introducing no extra burden in training while supporting flexible test-time adaptation.

\begin{table}[h]
\centering
\small
\setlength{\tabcolsep}{5pt}
\caption{Runtime and memory comparison.
Times are mean$\pm$std (ms). Peak memory is in GiB.}
\begin{tabular}{lcccccc}
\toprule
& \multicolumn{2}{c}{\bf Inference} & \multicolumn{3}{c}{\bf Training (per step)} \\
\cmidrule(lr){2-3}\cmidrule(lr){4-6}
\bf Method & Latency $\downarrow$ & Peak Mem $\downarrow$ & Micro-step $\downarrow$ & Opt-step $\downarrow$ & Peak Mem $\downarrow$ \\
\midrule
DynMoE & $17.2 \!\pm\! 9.1$ & $5.2$ & $428.4 \!\pm\! 168.7$ & $80.2 \!\pm\! 11.9$ & $11.4$ \\
Standard MoE & $ 11.0\!\pm\!14.5$ & $4.8$ & $130.7 \!\pm\! 90.5$ & $43.7 \!\pm\! 2.3$ & $7.9$ \\
MoE (cosine router) & $12.0 \!\pm\! 16.0$ & $4.8$ & $134.7 \!\pm\!100.0$ & $43.9 \!\pm\! 2.7$ & $8.0$ \\
STAR (m\;=\;3) & ${10.3} \!\pm\! {5.5}$ & ${4.7}$ & ${131.1} \!\pm\! 94.2$ & ${43.7} \!\pm\! 2.8$ & ${7.9}$ \\
STAR (TTA, m\;=\;1) & $10.4 \!\pm\! 6.6$ & $4.8$ & ${127.1} \!\pm\! 92.4$ & ${43.6} \!\pm\! 2.8$ & ${8.0}$ \\
STAR (TTA, m\;=\;3) & $16.4 \!\pm\! 7.6$ & $4.8$ & - & - & - \\
\bottomrule
\end{tabular}
\vspace{-0.5em}
\label{tbl:runtime_memory}
\end{table}

\section{Theoretical Derivations}~\label{app:derivations}
\lemmamixedenergy*

\begin{proof}
By definition $Z:=RV$ and the $k$-th row of $R$ is $r_k^\top$. Hence the $k$-th routing logit satisfies
\[
\ell_k(x)
\;=\;
z_k^\top x
\;=\;
(r_k^\top V)x
\;=\;
r_k^\top (Vx)
\;=\;
r_k^\top y.
\]
Since $x$ is assumed to be zero-mean and $y=Vx$ is a linear transform with deterministic $V$, we have $\mathbb{E}[y]=0$ due to the linearity of expectation. Therefore,
\begin{equation}\label{eq:logitenergydef}
\mathcal{L}_k
\;:=\;
\mathbb{E}\!\left[\ell_k(x)^2\right]
\;=\;
\mathbb{E}\!\left[(r_k^\top y)^2\right]
\;=\;
\mathbb{E}\!\left[r_k^\top yy^\top r_k\right].
\end{equation}
Taking expectation over $x$ (hence over $y$), we can pull it out $r_k$
\[
\mathbb{E}\!\left[r_k^\top yy^\top r_k\right]
=
r_k^\top \,\mathbb{E}[yy^\top]\, r_k.
\]
Given $y:= Vx$ with zero-mean $x$,
let $\Sigma_x := \operatorname{Cov}(x) = \mathbb{E}[xx^\top]$. Then
\[
\operatorname{Cov}(y)
= \mathbb{E}\!\left[(y-\mathbb{E}[y])(y-\mathbb{E}[y])^\top\right]
= \mathbb{E}[yy^\top]
= \mathbb{E}[Vxx^\top V^\top]
= V\,\Sigma_x\,V^\top.
\]
Given $V$ is orthonormal and its rows form an eigenbasis of $\Sigma_x$, then
\[
V\,\Sigma_x\,V^\top = \Lambda,
\]
where $\Lambda$ is diagonal. Hence
\[
\operatorname{Cov}(y_i,y_j) = \lambda_i \,\,\, \text{if} \,\, i=j, \,\, 0 \,\, \text{for } i \neq j.
\]
Substituting this back to Eq.~\eqref{eq:logitenergydef} yields
\[
\mathcal{L}_k
=
r_k^\top \Lambda r_k.
\]
Finally, since $\Lambda=\mathrm{diag}(\lambda_1,\dots,\lambda_K)$ is diagonal, we can expand the quadratic form
\[
r_k^\top \Lambda r_k
=
\sum_{i=1}^K \sum_{j=1}^K r_{k,i}\Lambda_{ij}r_{k,j}
=
\sum_{i=1}^K r_{k,i}\Lambda_{ii}r_{k,i}
=
\sum_{i=1}^K \lambda_i r_{k,i}^2.
\]
\end{proof}

\propnoR*

\begin{proof}
We prove each case under the setting of Lemma~\ref{lem:mixed_energy}. Throughout, we use the empirical diagonal covariance
$\hat{\Lambda}=\mathrm{diag}(\hat{\lambda}_1,\dots,\hat{\lambda}_K)$ computed from samples $y=\hat Vx$ as a consistent plug-in estimator of
$\Lambda$.

If $R=I$, then $Z=R{\widehat V}= {\widehat V}$, and hence $\ell(x)=Zx={\widehat V}x=y$. In particular, $\ell_k(x)=y_k$. By the assumption $\mathrm{Cov}(y)=\hat \Lambda=\mathrm{diag}(\hat \lambda_1,\dots,\hat \lambda_K)$ and $\mathbb{E}[y]=0$, we have
\[
\mathcal{L}_k
=\mathbb{E}\!\left[\ell_k(x)^2\right]
=\mathbb{E}[y_k^2]
=\widehat{\mathrm{Var}}(y_k)
=\hat \lambda_k,
\]
Moreover,
\[
\frac{\max_k \mathcal{L}_k}{\frac{1}{K}\sum_{j=1}^K \mathcal{L}_j}
=
\frac{\max_k \hat \lambda_k}{\frac{1}{K}\sum_{j=1}^K \hat \lambda_j}
=
\frac{\hat \lambda_1}{\frac{1}{K}\sum_{j=1}^K \hat \lambda_j},
\]
where we used the convention $\hat \lambda_1\ge\cdots\ge\hat \lambda_K$.
\end{proof}

\proprandorthoR*

\begin{proof}
By Lemma~3.1,
\[
\mathcal{L}_k = r_k^\top \hat \Lambda r_k.
\]
Taking expectation over the random orthonormal $R$ and using $\mathbb{E}[r_k r_k^\top]=\frac{1}{K}I$, we obtain
\[
\mathbb{E}[\mathcal{L}_k]
=
\mathbb{E}\!\left[r_k^\top \hat \Lambda r_k\right]
=
\mathbb{E}\!\left[\mathrm{tr}(\hat \Lambda r_k r_k^\top)\right]
=
\mathrm{tr}\!\left(\hat \Lambda\,\mathbb{E}[r_k r_k^\top]\right)
=
\mathrm{tr}\!\left(\hat \Lambda\cdot \frac{1}{K}I\right)
=
\frac{1}{K}\mathrm{tr}(\hat \Lambda)
=
\frac{1}{K}\sum_{i=1}^K \hat \lambda_i.
\]
\end{proof}

\section{Basis Quality Analysis with Varying GHA Iterations}\label{app:basisanalysis}

In this section, we analyze how the number of GHA update iterations per forward pass ($m$) affects the quality of the constructed basis. Since GHA operates incrementally and approximates the principal components through iterative updates, the choice of $m$ governs the trade-off between computational cost and approximation fidelity. We compare the cumulative explained variance of GHA-derived basis to that of full-batch SVD across three representative settings: (i) hidden states from the trained encoder of Transformer used in the second synthetic task, (ii) ResNet-18 features on CIFAR-100, and (iii) ResNet-18 features on TinyImageNet. In each case, we vary $m \in \{1, 3, 10\}$ and evaluate the variance captured by the top 100 components.

\begin{figure}[h]
\centering
\small
\begin{minipage}{\linewidth}
    \centering
    \includegraphics[width=0.8\linewidth]{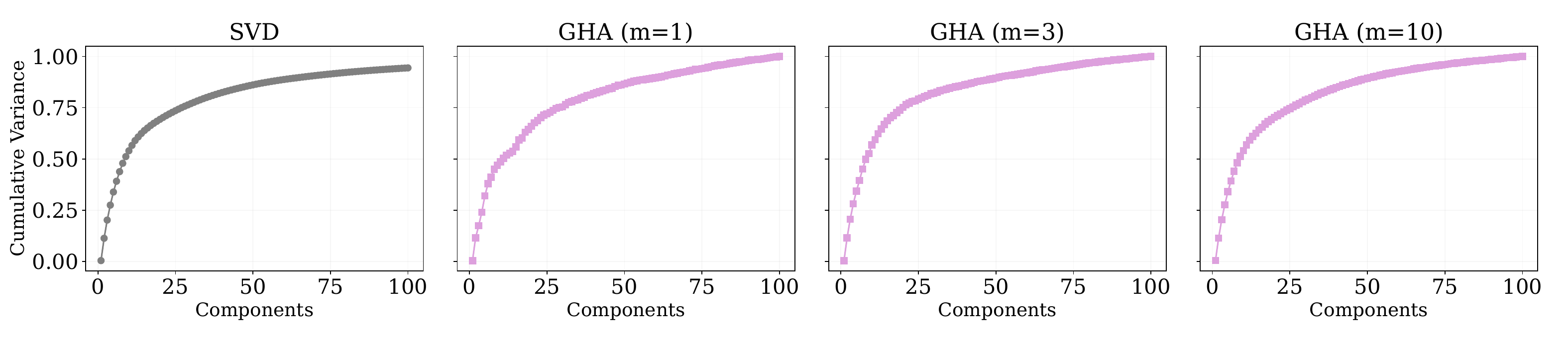}
    \caption*{{(i)} Hidden states from the Transformer encoder used in the synthetic language task.}
\end{minipage}
\vskip 0.7em
\begin{minipage}{\linewidth}
    \centering
    \includegraphics[width=0.8\linewidth]{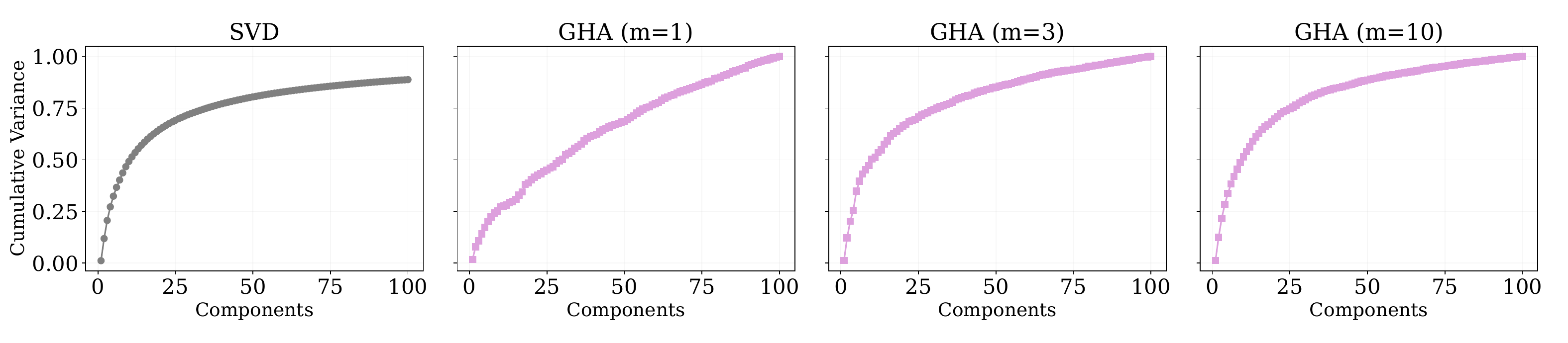}
    \caption*{{(ii)} ResNet-18 features from CIFAR-100.}
\end{minipage}
\vskip 0.7em
\begin{minipage}{\linewidth}
    \centering
    \includegraphics[width=0.8\linewidth]{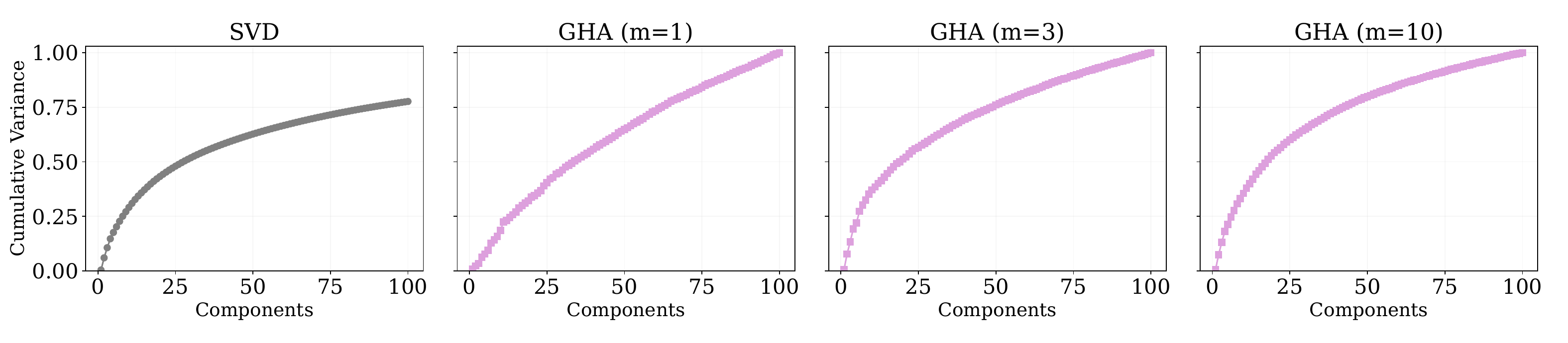}
    \caption*{{(iii)} ResNet-18 features from TinyImageNet.}
\end{minipage}
\caption{\textbf{Effect of GHA iteration number $m$ on basis approximation quality.}
Cumulative explained variance of the top 100 components extracted using GHA with varying $m \in \{1, 3, 10\}$, compared to full-batch SVD. Results are shown for three settings: (i) Transformer hidden states on synthetic language datasets, (ii) ResNet-18 features on CIFAR-100, and (iii) ResNet-18 features on TinyImageNet. Higher $m$ improves approximation fidelity to SVD.} 
\label{fig:app_basis_quality}
\vskip -0.5em
\end{figure}

Figure~\ref{fig:app_basis_quality} shows that as the number of GHA iterations increases, the cumulative explained variance curves increasingly align with those obtained from SVD. This confirms that higher $m$ enables GHA to better approximate the principal subspace. Overall, GHA provides a viable online alternative to SVD, with approximation quality controllable via $m$. Note that GHA fits a fixed number of components $K$, and the cumulative variance curve is normalized over these $K$ dimensions. In contrast, SVD operates over the full rank of the input data (i.e., $d$), and reports the ratio of total variance explained up to the top-$K$ components. As a result, GHA curves always end at 1.0 by construction, while the corresponding SVD curves may saturate earlier or lower depending on how much of the total variance is captured within the top-$K$ singular directions.

\begin{table}[h]
\centering
\caption{Ablation study on the number of GHA iterations ($m$) in GLUE benchmark. Results are reported as Avg$\pm$Std over three random seeds.}
\vspace{0.5em}
\scriptsize
\resizebox{0.80\textwidth}{!}{%
\begin{tabular}{l c c c c c c c c}
\toprule
$m$ & $(K, k)$ & CoLA & MRPC & QNLI & MNLI & RTE & Average \\
\midrule
\multirow{3}{*}{1} 
& (8, 1) & 64.68$\pm$0.83 & 90.52$\pm$0.15 & 92.40$\pm$0.18 & 86.47$\pm$0.03 & 74.49$\pm$0.45 &  81.71 \\
& (8, 2) & 66.49$\pm$0.47 & 89.85$\pm$1.12 & 92.58$\pm$0.06 & 86.66$\pm$0.13 & 74.85$\pm$1.96 &  82.10 \\
& (8, 4) & 65.36$\pm$1.17 & 90.10$\pm$0.35 & 92.40$\pm$0.15 & 86.48$\pm$0.14 & 75.33$\pm$1.19 &  81.94 \\
\midrule
\multirow{3}{*}{2} 
& (8, 1) & 66.25$\pm$0.96 & 90.47$\pm$1.28 & 92.46$\pm$0.20 & 86.65$\pm$0.21 & 74.13$\pm$0.85 &  81.99 \\
& (8, 2) & 65.93$\pm$2.19 & 90.03$\pm$0.32 & 92.59$\pm$0.18 & 86.76$\pm$0.08 & 74.24$\pm$0.34 &  81.91 \\
& (8, 4) & 65.03$\pm$0.76 & 89.62$\pm$0.96 & 92.48$\pm$0.08 & 86.55$\pm$0.24 & 75.10$\pm$0.29 &  81.76 \\
\midrule
\multirow{3}{*}{3} 
& (8,1) & 65.54$\pm$0.47 & 90.25$\pm$0.55 & 92.56$\pm$0.23 & 86.52$\pm$0.18 & 74.61$\pm$0.74 &  81.90 \\
& (8,2) & 65.81$\pm$0.80 & 90.03$\pm$0.32 & 92.52$\pm$0.15 & 86.63$\pm$0.08 & 75.57$\pm$0.61 &  82.11 \\
& (8,4) & 66.62$\pm$0.65 & 89.68$\pm$0.20 & 92.61$\pm$0.10 & 86.74$\pm$0.11 & 75.57$\pm$0.68 &  82.24 \\
\bottomrule
\end{tabular}
\label{tbl:gha_iters}
}
\end{table}

\section{Ablation Studies}~\label{app:abl}
\vspace{-1em}

We conduct ablation studies to analyze the sensitivity of STAR to its key design choices. 
In particular, we focus on two key aspects of STAR. First, we examine the effect of reducing the number of GHA updates $m$ performed at each training step. Since the iterative updates are designed to incrementally refine the principal subspace, varying $m$ allows us to understand how much adaptation is required to achieve stable and accurate routing. Second, we analyze the role of the GHA-driven basis itself in capturing the input structure and promoting expert specialization. To this end, we replace the learned principal basis with a fixed set of orthogonal random vectors while keeping the interpolation with the standard learnable gating matrix unchanged. This comparison disentangles the benefit of true data-driven subspace learning from simply mixing in an auxiliary random basis.

\subsection{Performance with Smaller GHA Updates}

Table~\ref{tbl:gha_iters} reports results with smaller numbers of GHA iterations $m \in \{1,2,3\}$. 
While all settings consistently outperform the baseline MoE and DynMoE models, the choice of $m=3$ yields the most consistent and superior performance across tasks. 
Smaller values of $m$ (e.g., $m=1$ or $m=2$) still provide competitive results, confirming that even limited iterative updates are sufficient to capture structural patterns, but additional updates stabilize training and improve generalization.

\section{Experimental Setup}\label{app:expsetup}

We present the experimental setup used to evaluate STAR across language and vision tasks. 
This section first describes the datasets, covering the GLUE benchmark for language understanding and the DomainBed benchmark for domain generalization, 
and then outlines the training configurations and hyperparameters adopted for fair comparison with existing MoE baselines. 

\subsection{Datasets}~\label{app:data}

\vspace{-2em}
\paragraph{Commonsense Reasoning.}
We evaluate on zero-shot commonsense reasoning benchmark and reading-comprehension benchmark~\citep{lai2017racelargescalereadingcomprehension}. For multiple-choice tasks, we report task accuracy on PIQA~\citep{bisk2019piqareasoningphysicalcommonsense}, HellaSwag~\citep{zellers2019hellaswagmachinereallyfinish}, ARC-Easy, ARC-Challenge~\citep{clark2018thinksolvedquestionanswering}, LAMBADA~\citep{paperno2016lambadadatasetwordprediction}, and BoolQ~\citep{clark-etal-2019-boolq}.

\paragraph{GLUE Benchmark (Language Tasks).} 
For language modeling, we follow the MoEfication setup~\cite{zhang2022moeficationtransformerfeedforwardlayers} and finetune BERT-large~\cite{devlin2019bert} on the GLUE benchmark~\cite{wang2019gluemultitaskbenchmarkanalysis}. 
We evaluate on five representative GLUE subtasks: CoLA~\cite{warstadt2019neuralnetworkacceptabilityjudgments} (linguistic acceptability), MRPC~\cite{dolan-brockett-2005-automatically} (paraphrase detection), QNLI~\cite{wang2019gluemultitaskbenchmarkanalysis} (question-answer entailment), MNLI~\cite{xu-etal-2020-clue} (natural language inference), and RTE~\cite{bentivogli2009fifth} (textual entailment). 
These datasets jointly cover grammaticality, semantic similarity, and entailment, providing a comprehensive testbed for expert specialization in language understanding. 

\paragraph{GLUE-X OOD Benchmark.} 
To evaluate robustness under distribution shift, we additionally consider GLUE-X~\cite{yang2023gluexevaluatingnaturallanguage}, an extension of GLUE that augments each in-distribution (ID) task with corresponding out-of-distribution (OOD) test sets drawn from different domains. GLUE-X provides 15 OOD datasets spanning eight GLUE tasks, enabling systematic assessment of cross-domain generalization. Importantly, models are trained only on the standard GLUE training sets and then directly evaluated on unseen OOD datasets, without exposure to target-domain data.
In our experiments, we follow the GLUE-X protocol and evaluate STAR on selected OOD datasets corresponding to the five GLUE subtasks used in our ID experiments (CoLA, MRPC, QNLI, MNLI, and RTE). Specifically, we adopt CoLA$\,\to\,$CoLA-OOD, MRPC$\,\to\,$QQP / Twitter, QNLI$\,\to\,$QNLI-OOD, MNLI$\,\to\,$SICK, and RTE$\,\to\,$SciTail / HANS, as shown in Figure~\ref{fig:gluex_ood}.

\paragraph{ImageNet-1k and ImageNet-C.}
ImageNet-1k is a large-scale visual classification benchmark containing 1{,}000 object categories and 1.28M training images, serving as the pretraining source for our ViT-S/32 backbone. 
For robustness evaluation, we adopt ImageNet-C, which applies 15 corruption types spanning noise, blur, weather, and digital distortions, each at severity levels~1--5. 
These corruptions simulate realistic distribution shifts and enable systematic assessment of OOD robustness. 
Following the standard protocol, models are trained on clean ImageNet-1k and evaluated directly on ImageNet-C without exposure to corrupted images during training.

\subsection{Hyperparameters and Configuration}
Here we document the full training and evaluation configurations used across all experimental settings in this work.

\begin{table}[h]
  \centering
  \small
  \caption{Pretraining configuration for the 182M and 469M LLaMA-MoE models.}
  \begin{tabular}{llcc}
  \toprule
  Category & Hyperparameter & 182M & 469M \\
  \midrule
  \multicolumn{4}{l}{\textsc{Backbone (LLaMA-MoE)}} \\
  & \# layers $L$ & 12 & 24 \\
  & Hidden size $d$ & 768 & 1024 \\
  & FFN hidden size & $4d = 3072$ & $4d = 4096$ \\
  & Attention heads & 12 & 16 \\
  & Query groups (GQA) & 4 & 4 \\
  & Activation & SwiGLU & SwiGLU \\
  & Sequence length / max positions & 1024 / 1024 & 1024 / 1024 \\
  & Dropout (attention / hidden) & 0.0 / 0.0 & 0.0 / 0.0 \\
  & Tied input/output embeddings & no & no \\
  \midrule
  \multicolumn{4}{l}{\textsc{MoE configuration}} \\
  & \# experts $E$ & 8 & 8 \\
  & Top-$k$ selection & 1 & 1 \\
  & Granularity & 1 & 1 \\
  & Load-balancing aux-loss coeff. $\alpha$ & 1e-2 & 1e-2 \\
  & $m$ (GHA iteration) & 1 & 1 \\
  & $\eta$ (GHA learning rate) & 2e{-5} & 5e{-5} \\
  \midrule
  \multicolumn{4}{l}{\textsc{Optimization}} \\
  & Hardware & 4 $\times$ RTX 6000 Blackwell & 4 $\times$ RTX 6000 Blackwell \\
  & Optimizer & AdamW ($\beta_1{=}0.9,\beta_2{=}0.999$) & AdamW ($\beta_1{=}0.9,\beta_2{=}0.999$) \\
  & Weight decay & 0.01 & 0.01 \\
  & Global / micro batch size & 512 / 64 & 512 / 64 \\
  & Base learning rate & 4e{-4} & 4e{-4} \\
  & Min learning rate & 5e{-5} & 5e{-5} \\
  & LR schedule & cosine decay & cosine decay \\
  & Gradient clipping & 1.0 & 1.0 \\
  & Precision & bfloat16 & bfloat16 \\
  & Training iterations & 60{,}000 & 60{,}000 \\
  & Total training tokens & 30B & 30B \\
  \bottomrule
  \end{tabular}
  \end{table}

\begin{table}[h]
\centering
\small
\caption{{Detailed training hyper-parameters and configuration for GLUE fientuning experiments.}}
\vskip 0.3em
\begin{tabular}{lccccc}
\toprule
{Config} & {CoLA} & {MRPC} & {QNLI} & {MNLI} & {RTE} \\
\midrule
$m$ (GHA iteration) & 3 & 3 & 3 & 3 & 3 \\
$\eta$ (GHA learning rate) & 2e-5 & 2e-5 & 2e-5 & 2e-5 & 2e-5  \\
\midrule
Epoch & 10 & 7 & 3 & 3 & 9 \\
Learning rate & 2e-5 & \{2e-5, 3e-5\}$^*$ & 2e-5 & 2e-5 & \{2e-5, 3e-5\}$^*$ \\
MoE layer & \{10, 12\}$^*$ & 10 & \{10, 12\}$^*$ & 10 & \{10, 12\}$^*$ \\
LR schedule & Linear & Linear & Linear & Linear & Linear \\
Weight decay & 0.0 & 0.0 & 0.0 & 0.0 & 0.0 \\
Train Batch size / GPU & 32 & 32 & 32 & 32 & 32 \\
Eval Batch size / GPU & 8 & 8 & 8 & 8 & 8 \\
GPU & \multicolumn{5}{c}{1 $\times$ A6000 (48G)} \\
\bottomrule
\end{tabular}
\label{tbl:glue_config}
\end{table}

\begin{table}[t]
\centering
\scriptsize
\caption{Training and evaluation configuration for the vison OOD experiments on ViT-S/32.
We follow the GMoE integration strategy~\citep{li2023sparsemixtureofexpertsdomaingeneralizable}.
}
\label{tbl:imagenet_config}
\begin{tabular}{llc}
\toprule
Category & Hyperparameter & Value \\
\midrule
\multicolumn{3}{l}{\textsc{Backbone and Data}} \\
& Backbone & ViT-S/32 \\
& Pretraining dataset & ImageNet-1k \\
& OOD evaluation dataset & ImageNet-C (15 corruptions, severities 1, 3, 5) \\
\midrule
\multicolumn{3}{l}{\textsc{MoE configuration}} \\
& \# experts $E$ & 6 \\
& Top-$k$ routing & $k = 2$ \\
& $m$ (GHA iteration)  & 3 \\
& $\eta$ (GHA learning rate) & 2e-5 \\
\midrule
\multicolumn{3}{l}{\textsc{Optimization}} \\
& GPU & 1 $\times$ RTX A6000 \\
& Learning rate & 5e-5 \\
& Weight decay & 0.0 \\
& Dropout & 0.1 \\
& Batch size & 256 \\
& Training steps & 10{,}000 \\
\bottomrule
\end{tabular}
\end{table}

\vspace{-1em}
\section{Full Experimental Results on Synthetic-Data}\label{app:syn_full_res}

\subsection{Evolution of Interpolation Coefficient $\alpha$}

We provide the full results on the dynamics of the interpolation coefficient $\alpha$ across all expert sizes $K$ and top-$k$ settings, extending the representative trends shown in the main paper. For each configuration, we compute (i) the slope of the mean $\alpha$ over epochs and (ii) the temporal evolution of $\alpha$ throughout training.
Specifically, for each epoch we log per-expert $\alpha$ values, average them across experts and random seeds, and then fit a linear regression of $\alpha$ against epochs to obtain the slope. This slope reflects the overall direction of $\alpha$’s change during training. In addition, we plot the mean $\alpha$ at every epoch with standard deviation across seeds, showing the full trajectory of its evolution.

\begin{figure}[t]
    \centering
    \begin{minipage}{0.91\textwidth}
        \centering
        \includegraphics[width=0.19\textwidth]{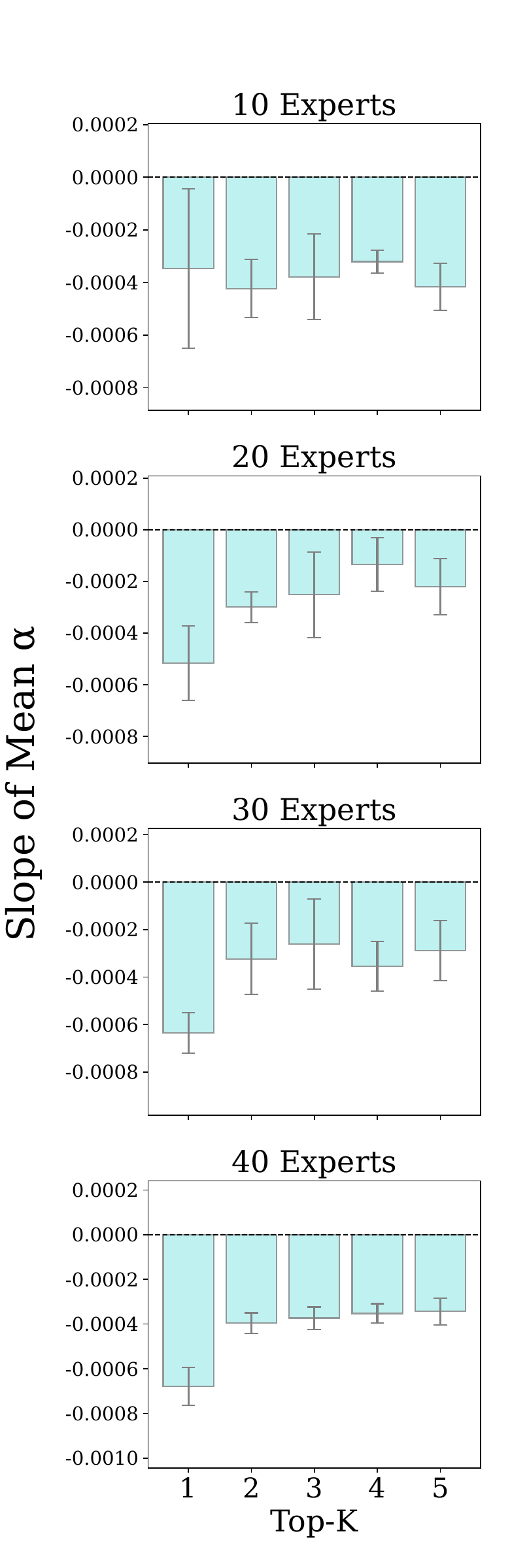}
        \centering
        \includegraphics[width=0.70\textwidth]{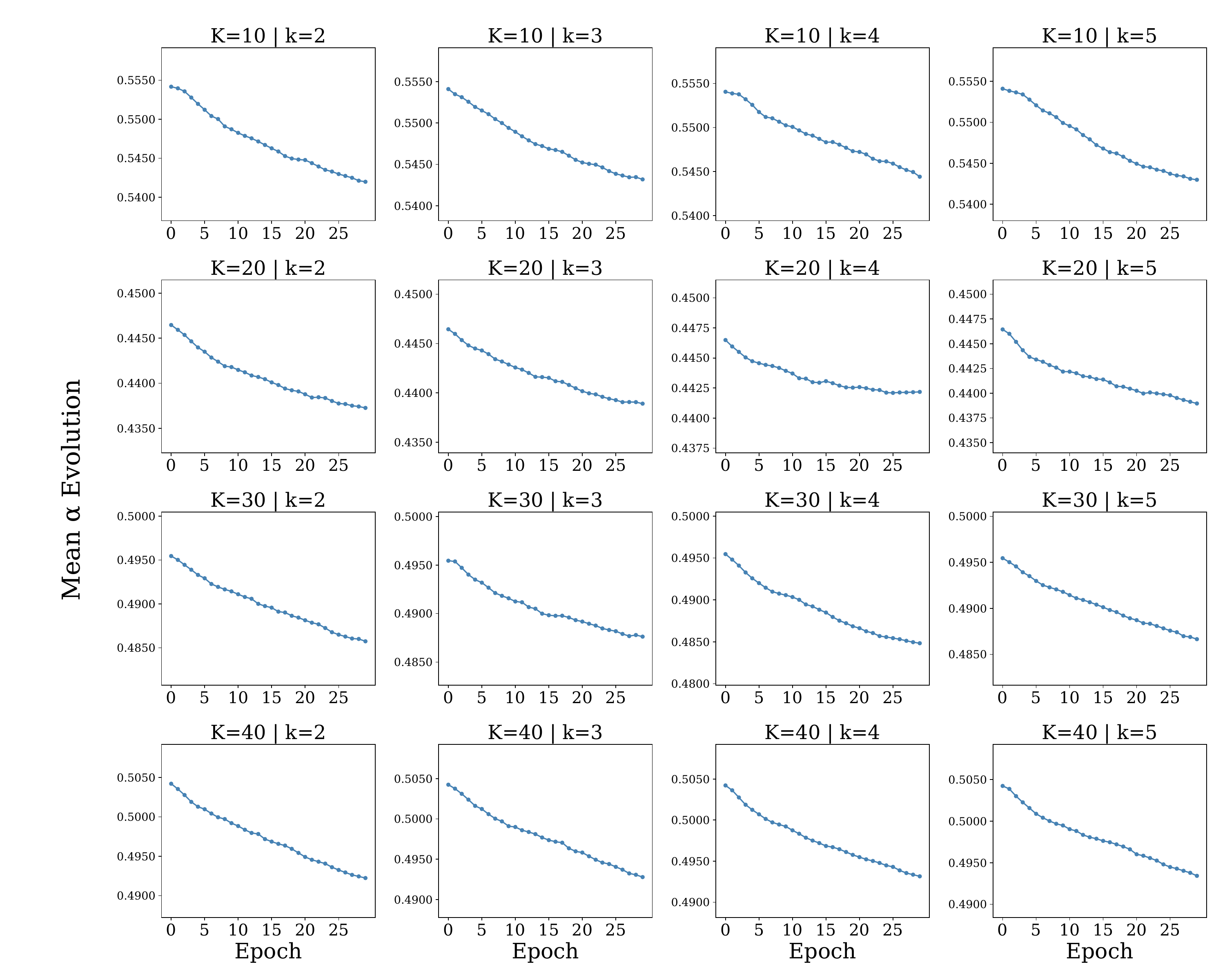}
    \end{minipage}
    \caption{\textbf{Evolution of Interpolation Coefficient $\alpha$.} 
    Left: Average slope of the mean $\alpha$ across different Top-$k$ settings.
    Right: Temporal evolution of mean $\alpha$ throughout training epochs.}
    \label{fig:app_alpha_full}
    \vspace{-1em}
\end{figure}

Figure~\ref{fig:app_alpha_full} confirms the consistent downward trend of $\alpha$, observed across all $K$ and $k$ configurations. This indicates that the gating mechanism increasingly shifts toward reliance on the GHA-driven subspace as training progresses, reducing dependence on the purely learnable gating matrix. While the absolute changes in $\alpha$ are small in magnitude, the uniformity of the trend across settings demonstrates the robustness of STAR’s balance adaptation. These extended results reinforce our main finding that incremental subspace learning via GHA is progressively prioritized during training, leading to stable expert specialization.


\end{document}